\PassOptionsToPackage{dvipdfmx}{graphicx}
\documentclass[]{interact}
\usepackage{graphicx}
\usepackage{makecell}
\usepackage{multirow}
\usepackage{arydshln}

\usepackage{epstopdf}
\usepackage[caption=false]{subfig}

\usepackage[numbers,sort&compress]{natbib}
\bibpunct[, ]{[}{]}{,}{n}{,}{,}

\usepackage{color}
\usepackage[hyphens]{url}

\usepackage{amsmath,amssymb}
\usepackage{algorithm}
\usepackage{algpseudocode}

\usepackage{makecell}
\usepackage{arydshln}

\usepackage[numbers,sort&compress]{natbib}
\bibpunct[, ]{[}{]}{,}{n}{,}{,}

\theoremstyle{plain}
\newtheorem{theorem}{Theorem}[section]

\newtheorem{proposition}[theorem]{Proposition}

\theoremstyle{definition}

\theoremstyle{remark}

\begin{document}

\articletype{ARTICLE TEMPLATE}

\title{Covariance-Based Structural Equation Modeling in Small-Sample Settings with $p>n$}

\author{
\name{Hiroki Hasegawa\textsuperscript{a}\thanks{CONTACT Hiroki Hasegawa. Email: hasegawa.hiroki.tkb\_en@u.tsukuba.ac.jp},
Aoba Tamura\textsuperscript{a},
and Yukihiko Okada\textsuperscript{b,c,d}}
\affil{
\textsuperscript{a}Graduate School of Science and Technology, University of Tsukuba, Tsukuba, Ibaraki, Japan;
\textsuperscript{b}Institute of Systems and Information Engineering, University of Tsukuba, Tsukuba, Ibaraki, Japan;
\textsuperscript{c}Tsukuba Institute for Advanced Research, University of Tsukuba, Tsukuba, Ibaraki, Japan;
\textsuperscript{d}Center for Artificial Intelligence Research, University of Tsukuba, Tsukuba, Ibaraki, Japan
}
}

\maketitle

\begin{abstract}
Factor-based Structural Equation Modeling (SEM) relies on likelihood-based estimation assuming a nonsingular sample covariance matrix, which breaks down in small-sample settings with $p>n$. To address this, we propose a novel estimation principle that reformulates the covariance structure into self-covariance and cross-covariance components. The resulting framework defines a likelihood-based feasible set combined with a relative error constraint, enabling stable estimation in small-sample settings where $p>n$ for sign and direction. Experiments on synthetic and real-world data show improved stability, particularly in recovering the sign and direction of structural parameters. These results extend covariance-based SEM to small-sample settings and provide practically useful directional information for decision-making.
\end{abstract}

\begin{keywords}
Covariance-based SEM,
Small data,
Covariance matrix singularity,
Feasible set estimation,
Latent variable models,
\end{keywords}

\section{Introduction}

Structural Equation Modeling (SEM) is a statistical method for representing the relationships between latent and observed variables through covariance structures and for statistically evaluating the extent to which a hypothesized causal structure fits the data \citep{MacCallum2000}. Although SEM provides a general framework for modeling relations between latent and observed variables, it can be broadly classified into two fundamentally different paradigms according to its theoretical foundation: Factor-based SEM, which formulates latent variables as random variables and aims to reproduce the covariance structure implied by the latent variable model, and Component-based SEM, which constructs latent variables as linear combinations of observed variables and places greater emphasis on predictive performance and explanatory power \citep{cho2022comparative}.

Factor-based SEM is a framework that explains the covariance structure among observed variables by means of a latent variable model, and its estimation is based on the agreement between the sample covariance matrix and the model-implied covariance matrix. A representative approach within this framework is covariance-based SEM (CB-SEM), which is typically implemented via maximum likelihood estimation that minimizes the discrepancy between these two covariance matrices \citep{Joreskog1967}. This estimation procedure relies on the classical asymptotic framework in which \( p \) is fixed and \( n \to \infty \), under conditions such as model identifiability and regularity of the covariance matrix, and is characterized by the fact that consistency and asymptotic normality are guaranteed \citep{wald1949note}. The term CB-SEM itself is commonly used in the literature to refer to this class of covariance structure–based estimation approaches (e.g., \cite{hair2025covariance}).

In contrast, Component-based SEM does not aim at the exact reproduction of the covariance structure, but rather ensures estimability even in small-sample or high-dimensional settings by constructing latent variables as linear combinations of observed variables. Representative examples include PLS-SEM and Generalized Structured Component Analysis (GSCA), both of which are based on estimation principles that do not depend on a likelihood function and therefore avoid the problems caused by covariance matrix singularity \citep{Hair2011,Hwang2004}. Component-based SEM adopts an approach in which constructs are defined as weighted sums of observed variables, and is therefore fundamentally different from Factor-based SEM, which estimates latent variables under a common factor model. As a result, confirmatory factor analysis, which tests whether the observed variables are generated by an assumed latent factor structure, is not conducted, and the validity of the measurement model is not explicitly examined. In other words, Component-based SEM defines constructs compositionally from observed variables, whereas Factor-based SEM assumes the existence of latent constructs and estimates them; the two therefore rest on fundamentally different modeling principles.

In addition to these approaches, there exists a Bayesian approach. Bayesian SEM has been reported to enable stable estimation even in small-sample settings by introducing prior distributions on latent variables and parameters \cite{lee2007structural}. We emphasize that Bayesian SEM differs from Factor-based SEM and Component-based SEM in that it represents an estimation framework rather than a classification based on model structure.

Based on the above discussion, SEM can be organized into three categories: Factor-based SEM and Component-based SEM under the frequentist paradigm, and Bayesian SEM under the Bayesian paradigm. Existing methods are summarized according to this classification in Table~\ref{table1}. As shown in Table~\ref{table1}, most Factor-based SEM approaches assume settings in which the dimensionality is smaller than or at most comparable to the sample size. Moreover, even SEM based on shrinkage targets, which has recently been proposed as an approach for small-sample settings, still assumes that the dimensionality is smaller than the sample size \cite{de2023model}. The comparison summarized in Table~\ref{table1} indicates that many situations referred to as ``small-sample'' settings in the existing literature correspond in practice to cases in which the number of observed variables \( p \) is smaller than the sample size \( n \), that is, \( p < n \). In contrast, the present study explicitly distinguishes between small-sample settings with \( p > n \) and those with \( p < n \) on the basis of the relationship between the number of observed variables and the sample size. The distinction between the regimes $p>n$ and $p<n$ is not merely descriptive, but rather constitutes an essential factor that determines the theoretical validity of estimation and inference.

\begin{table}[htbp]
\centering
\caption{Summary of methods. Here, Frequentist and Bayesian are abbreviated as Freq. and Bayes., respectively.}
\label{table1}
\begin{tabular}{lllll}
\hline
Method & \makecell{1st Author \\ (Year)} & \makecell{Paradigm \\ Freq./Bayes.} & \makecell{Estimation \\ Principle} & \makecell{Dimensional \\ Regime} \\
\hline
\multicolumn{5}{l}{\textit{Factor-based SEM}} \\
SEM & J\"oreskog (1970)\cite{joreskog1970general} & Freq. & Likelihood & $p < n$ \\
Bootstrap SEM & Bollen (1992)\cite{bollen1992bootstrapping} & Freq. & Resampling & $p < n$ \\
Robust SEM & Satorra (1994)\cite{satorra1994corrections} & Freq. & Robust loss & $p < n$ \\
Ridge-SEM & Yuan (2011)\cite{yuan2011ridge} & Freq. & L2 penalty & $p \approx n$ \\
LASSO-SEM & Jacobucci (2016)\cite{Jacobucci2016} & Freq. & L1 penalty & $p \approx n$ \\
Shrinkage-SEM & De Jonckere (2023)\cite{de2023model} & Freq. & Shrinkage & $p < n$ \\
\textbf{Prop. Method} & \textbf{This paper} & \textbf{Freq.} & \textbf{\makecell{HDLSS\\ Approach}} & $\mathbf{p>n}$\\
\hdashline
\multicolumn{5}{l}{\textit{Component-based SEM}} \\
PLS-SEM & Lohmoller(2013)\cite{lohmoller2013latent} & Freq. & Projection & $p > n$ \\
GSCA & Hwang (2004)\cite{Hwang2004} & Freq. & ALS & $p > n$ \\
\hdashline
\multicolumn{5}{l}{\textit{Bayesian SEM}} \\
Bayesian SEM & Lee (2007)\cite{lee2007structural} & Bayes. & Posterior & $p \ge n$ \\
\hline
\end{tabular}
\end{table}

However, in Factor-based SEM under small-sample settings with \( p > n \), the assumptions underlying estimation and inference based on asymptotic theory collapse structurally. The essential issue is not merely an increase in dimensionality, but the fact that, in the limit \( p/n \to \infty \), the assumptions required for conventional statistical inference cease to hold simultaneously. Specifically, because the sample covariance matrix \( S \) has rank at most \( n \), it is necessarily singular, and the Wishart assumption no longer holds. As a consequence, the likelihood function involving terms such as \( \log|S| \) becomes undefined or unstable, rendering likelihood-based estimation and inference procedures such as the chi-square test and the Wald test inapplicable \citep{johnstone2001distribution}. Furthermore, because the observed data are confined to a low-dimensional subspace, some parameters of the model-implied covariance matrix \( \Sigma(\theta) \) become unidentifiable, and the Fisher information matrix also becomes singular. In such singular models, the asymptotic normality of the maximum likelihood estimator does not hold, and Wilks' theorem is no longer applicable; consequently, the likelihood ratio statistic does not follow a \( \chi^2 \) distribution \citep{Chetelat2012,Rotnitzky2000,Watanabe2010,Drton2009}. Although extensions such as small-sample corrections \citep{Bentler1999}, regularized estimation \citep{yuan2011ridge,Jacobucci2016}, and robust estimation \citep{Yuan2007} have been proposed, none of them fundamentally resolve the intrinsic limitations of the covariance-structure-based framework itself.

Moreover, the nature of this breakdown differs qualitatively depending on the asymptotic regime of the dimensionality ratio \( p/n \). When \( p/n \to c \in (0,\infty) \), the eigenvalues of the sample covariance matrix exhibit systematic distortion, but correction is possible using random matrix theory \citep{johnstone2001distribution,paul2007asymptotics}. By contrast, in small-sample settings with \( p > n \) where \( p/n \to \infty \), the sample covariance matrix becomes intrinsically low-rank and unstable, and consistent estimation of the eigenspace structure of the population covariance matrix becomes impossible. Under the regime $p/n \to \infty$, estimators of both eigenvalues and eigenvectors lose consistency\citep{jung2009pca}. Therefore, the challenge of SEM in small-sample settings with \( p > n \) should be understood not merely as a deterioration in estimation accuracy, but as a structural problem in which the very assumptions required for statistical inference no longer hold.

In this study, under a problem setting defined by the relationship between the sample size \( n \) and the number of observed variables \( p \) (Figure~\ref{fig:d_and_n_classification}), we propose a novel covariance-based SEM framework that remains applicable even in small-sample settings with \( p > n \), where likelihood-based estimation and statistical inference fail. The proposed method re-formulates the estimation problem in a way that does not rely on conventional maximum likelihood estimation, thereby making it possible to estimate latent variables and to evaluate theoretical models.

\begin{figure}[h]
\centering
\includegraphics[width=0.75\linewidth]{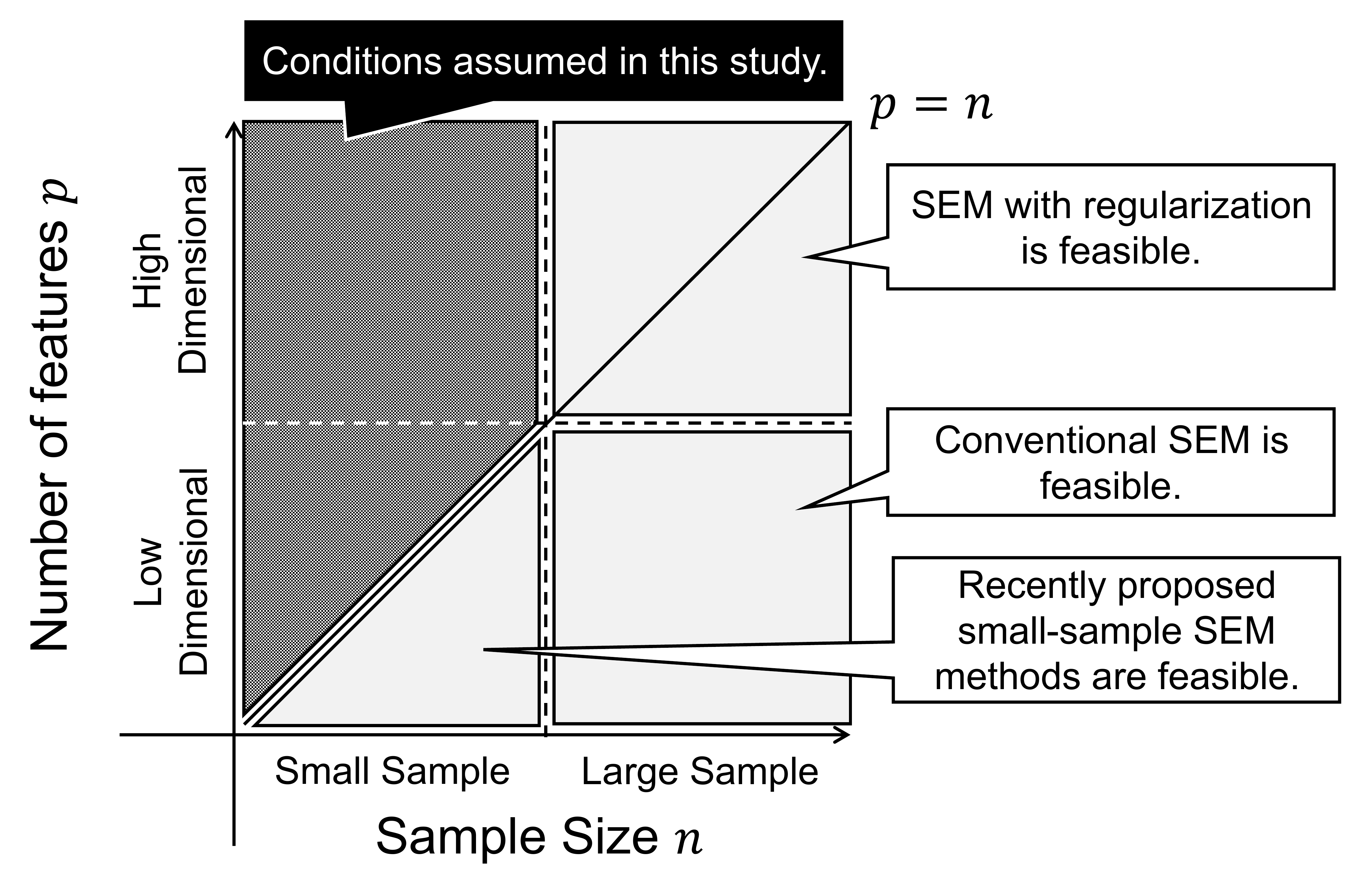}
\caption{Conceptual illustration of the applicability regions of SEM methods based on the relationship between the sample size $n$ and the number of features $p$. Conventional SEM and recently proposed small-sample SEM methods are primarily developed under the regime $p < n$, whereas regularized SEM is applicable not only in $p < n$ but also in settings where $p \approx n$. In contrast, this study focuses on the small-sample settings with \( p > n \), which lies outside the scope of these existing approaches.}
\label{fig:d_and_n_classification}
\end{figure}

\section{Proposed Approach}

We adopt a novel estimation framework that decomposes the covariance structure based on its estimability. Unlike the conventional two-step SEM \cite{anderson1988structural}, which sequentially estimates the measurement model and the structural model, our approach separates covariance information into self-covariance and cross-covariance components and assigns distinct statistical roles to each.

Specifically, the observed variables are partitioned into two blocks with dimensions $p_1$ and $p_2$, and we assume that $n < p_1 + p_2$ while $n > p_1$ and $n > p_2$. Under this setting, the self-covariance within each block can be stably estimated, whereas the full covariance matrix becomes singular. Our method exploits this partial estimability by constructing a feasible parameter set based on classical likelihood-based goodness-of-fit for the self-covariance blocks, while introducing constraints on the cross-covariance block based on scalar quantities that are stably estimable under high-dimensional settings, rather than enforcing direct matrix-level agreement.

For the estimation of the self-covariance structure, we assume a one-factor model for each block. That is, the latent variables are scalar, and the corresponding factor loading vectors are $\Lambda_x \in \mathbb{R}^{p_1}$ and $\Lambda_y \in \mathbb{R}^{p_2}$. The one-factor restriction is introduced to ensure identifiability. In general, multi-factor models suffer from rotational indeterminacy, i.e., $\Lambda \mapsto \Lambda Q$ with $Q^\top Q = I$, under which the covariance structure remains invariant. As a result, the parameterization is non-unique, leading to instability in the feasible set derived from self-covariance.

Furthermore, for the cross-covariance, we assume a rank-1 structure
\[
\Sigma_{xy} = \beta \Lambda_x \Lambda_y^\top,
\]
thereby reducing estimation to a scalar parameter $\beta$. In contrast, multi-factor models yield $\Sigma_{xy} = \Lambda_x B \Lambda_y^\top$, where the increased degrees of freedom deteriorate identifiability and estimation stability.

In summary, the proposed method decomposes the unstable estimation problem of the full covariance matrix into (i) construction of a candidate set based on self-covariance and (ii) selection based on cross-covariance constraints. By combining this structure with minimal identifiability constraints, stable estimation is achieved in small-sample settings.

\subsection{Model Formulation}

We describe the covariance structure of observed variables based on the LISREL (Linear Structural Relations) model. The observed variables are partitioned into two blocks, $\mathbf{x} \in \mathbb{R}^{p_1}$ and $\mathbf{y} \in \mathbb{R}^{p_2}$, and the measurement model is given by
\[
\mathbf{x} = \Lambda_x \xi + \delta,\quad
\mathbf{y} = \Lambda_y \eta + \epsilon.
\]
Here, $\xi \in \mathbb{R}$ and $\eta \in \mathbb{R}$ are latent variables, and $\delta \in \mathbb{R}^{p_1}$ and $\epsilon \in \mathbb{R}^{p_2}$ are measurement errors. The factor loading matrices are $\Lambda_x \in \mathbb{R}^{p_1 \times 1}$ and $\Lambda_y \in \mathbb{R}^{p_2 \times 1}$.

The structural model is defined as
\[
\eta = B \eta + \Gamma \xi + \zeta,
\]
which can be rewritten as
\[
\eta = (I-B)^{-1}\Gamma \xi + (I-B)^{-1}\zeta
\]
when $(I-B)$ is invertible.

We assume the following covariance structure with zero means:
\[
\mathrm{Cov}(\xi) = \Phi,\quad
\mathrm{Cov}(\zeta) = \Psi,\quad
\mathrm{Cov}(\delta) = \Theta_\delta,\quad
\mathrm{Cov}(\epsilon) = \Theta_\epsilon,
\]
and assume that $\xi, \zeta, \delta,$ and $\epsilon$ are mutually independent.

Under these assumptions, the theoretical covariance matrix of $\mathbf{z} = (\mathbf{x}^\top, \mathbf{y}^\top)^\top$ is given by
\[
\Sigma(\theta) =
\begin{pmatrix}
\Sigma_{xx}(\theta) & \Sigma_{xy}(\theta) \\
\Sigma_{yx}(\theta) & \Sigma_{yy}(\theta)
\end{pmatrix},
\]
where each block is expressed as
\begin{align*}
\Sigma_{xx}(\theta) &= \Lambda_x \Phi \Lambda_x^\top + \Theta_\delta,\\
\Sigma_{yy}(\theta)
&= \Lambda_y \left(
(I-B)^{-1}\Gamma \Phi \Gamma^\top (I-B)^{-\top}
+ (I-B)^{-1}\Psi (I-B)^{-\top}
\right)\Lambda_y^\top + \Theta_\epsilon,\\
\Sigma_{xy}(\theta)
&= \Lambda_x \Phi \Gamma^\top (I-B)^{-\top} \Lambda_y^\top.
\end{align*}

In general, LISREL models inherently suffer from identifiability issues. Specifically, it is well known that there exist several types of invariances, including rotational invariance arising from linear transformations of latent variables, scale invariance due to rescaling, non-uniqueness in the decomposition of cross-covariance (cross-decomposition invariance), and sign invariance due to sign flipping. These invariances imply that infinitely many parameter configurations yield the same covariance structure, making unique estimation impossible under the general model.

To mitigate these issues, we impose a one-factor model for each block. The resulting one-factor LISREL model is formulated as follows. The measurement model is
\[
\mathbf{x} = \Lambda_x \xi + \delta,\quad
\mathbf{y} = \Lambda_y \eta + \epsilon,
\]
and the structural model is given by
\[
\eta = \beta \xi + \zeta.
\]

We further assume the following covariance structure:
\[
\mathrm{Var}(\xi) = 1,\quad
\mathrm{Var}(\zeta) = \Psi,\quad
\mathrm{Cov}(\delta) = \Theta_\delta,\quad
\mathrm{Cov}(\epsilon) = \Theta_\epsilon,
\]
where $\Theta_\delta$ and $\Theta_\epsilon$ are diagonal matrices.

For identifiability, we impose the constraints
\[
\lambda_{x1} = 1,\quad \lambda_{y1} = 1.
\]

Under this formulation, the theoretical covariance matrix is given by
\begin{align*}
\Sigma_{xx} &= \Lambda_x \Lambda_x^\top + \Theta_\delta,\\
\Sigma_{yy} &= \Lambda_y (\beta^2 + \Psi)\Lambda_y^\top + \Theta_\epsilon \hspace{0.5em}(=\tau \Lambda_y \Lambda_y^\top + \Theta_\epsilon),\\
\Sigma_{xy} &= \beta \Lambda_x \Lambda_y^\top.
\end{align*}

With the introduction of the one-factor structure, rotational invariance is eliminated, and the remaining identifiability issues are limited to scale invariance and cross-decomposition invariance. The scale invariance is resolved by fixing $\mathrm{Var}(\xi)=1$, which removes the ambiguity in $\Phi \beta$, and further by imposing $\lambda_{x1}=1$ and $\lambda_{y1}=1$, which fixes the scale of the factor loadings. These constraints also implicitly fix the sign, thereby eliminating sign invariance.

Moreover, by assuming that the error covariance matrices $\Theta_\delta$ and $\Theta_\epsilon$ are diagonal, the factor loading vectors $\Lambda_x$ and $\Lambda_y$ are (up to fixed scale and sign) almost uniquely determined from the self-covariance structures $\Sigma_{xx}$ and $\Sigma_{yy}$. Therefore, under the proposed setting, it becomes possible to stably estimate $\Lambda_x$ and $\Lambda_y$ based solely on the self-covariance components.

\subsection{Estimation of Self-Covariance}

From the observed data $\mathbf{z}$, we construct the sample covariance matrix $S$ as follows:
\[
S =
\begin{pmatrix}
S_{xx} & S_{xy} \\
S_{yx} & S_{yy}
\end{pmatrix}.
\]
Among these, the sample self-covariance matrices $S_{xx}$ and $S_{yy}$ represent the variance structure within each variable block.

In high-dimensional settings, the estimation error of the sample covariance matrix increases with dimensionality, and stable estimation requires an appropriate relationship between the dimension and the sample size \citep{bickel2008regularized}. We assume $n > p_1$ and $n > p_2$. Under this assumption, if the observed data follow a continuous distribution, $S_{xx}$ and $S_{yy}$ are positive definite with probability one, allowing the application of classical maximum likelihood estimation based on the Wishart distribution.

The negative log-likelihood functions for each self-covariance block are given by
\begin{align*}
    \ell_x(\Lambda_x, \Theta_\delta)
    &= \log \lvert \Sigma_{xx}(\Lambda_x, \Theta_\delta) \rvert
    + \operatorname{tr}\bigl(S_{xx}\Sigma^{-1}_{xx}(\Lambda_x, \Theta_\delta)\bigr),\\
    \ell_y(\Lambda_y, \Theta_\epsilon, \tau)
    &= \log \lvert \Sigma_{yy}(\Lambda_y, \Theta_\epsilon, \tau) \rvert
    + \operatorname{tr}\bigl(S_{yy}\Sigma^{-1}_{yy}(\Lambda_y, \Theta_\epsilon, \tau)\bigr).
\end{align*}
Here, the theoretical self-covariance matrices are defined as
\[
\Sigma_{xx}(\Lambda_x, \Theta_\delta) = \Lambda_x \Lambda_x^\top + \Theta_\delta,\quad
\Sigma_{yy}(\Lambda_y, \Theta_\epsilon, \tau) = \tau \Lambda_y \Lambda_y^\top + \Theta_\epsilon.
\]

Accordingly, the maximum likelihood estimator for the self-covariance component is defined as
\begin{align}
\label{ml-theta-self}
\widehat{\theta}
&=
\operatorname*{arg\,min}_{\Theta}
\left\{
\ell_x(\Lambda_x, \Theta_\delta)
+
\ell_y(\Lambda_y, \Theta_\epsilon, \tau)
\right\},\\
\widehat{\theta}
&:=
\left(
\widehat{\Lambda}_x, \widehat{\Theta}_{\delta},
\widehat{\Lambda}_y, \widehat{\Theta}_\epsilon,
\widehat{\tau}
\right),\notag\\
\Theta_{\mathrm{self}}^{\mathrm{param}}
&=
\left\{
(\Lambda_x, \Theta_\delta,\Lambda_y, \Theta_\epsilon, \tau)
:\ 
\lambda_{x1}=1,\ 
\lambda_{y1}=1,\ 
\Theta_\delta, \Theta_\epsilon \text{ are diagonal positive definite},\
\tau>0
\right\}.\notag
\end{align}

In practice, due to nonconvex optimization, the maximum likelihood solution for the self-covariance component is not necessarily unique, as it may depend on initialization and numerical procedures. To address this issue, we define the solution not as a single point estimate, but as a set of parameters in the vicinity of the optimal solution.

Specifically, we define the feasible set for the self-covariance component as
\[
\Theta_{\mathrm{self}}
=
\left\{
\theta \in \Theta_{\mathrm{self}}^{\mathrm{param}}
:\ 
\ell_{\mathrm{self}}(\theta)
\le
\ell_{\mathrm{self}}(\widehat{\theta}_{\mathrm{self}})
+
\epsilon_n
\right\},
\]
where $\ell_{\mathrm{self}}(\theta)=\ell_x+\ell_y$ denotes the negative log-likelihood for the self-covariance component, $\widehat{\theta}_{\mathrm{self}}$ is the optimal solution obtained via numerical optimization, and $\epsilon_n$ is a tolerance parameter that accounts for estimation error.

This formulation allows us to construct a set of statistically equivalent solutions that are robust to optimization instability and local minima. The threshold $\epsilon_n$ is determined via a bootstrap procedure. Details of this procedure are provided in Section~\ref{subsection:hyperparameter}.

\subsection{Estimation of Cross-Covariance}

Previous studies (e.g., \cite{Umino2025}) have shown that the sample cross-covariance matrix $S_{xy}$ is not a stable estimator in high-dimensional, low-sample-size settings. In particular, under the regime $n < p_1 + p_2$, $S_{xy}$ is generally inconsistent and highly sensitive to noise, making it unsuitable for direct use in estimation.

To address this issue, \cite{Umino2025} proposed a thresholded estimator of the cross-covariance matrix, denoted by $\widetilde{\Sigma}_{xy}$. This estimator removes noise components by imposing a sparsity assumption on the structure of the cross-covariance matrix, and it has been shown that, under appropriate conditions, it achieves consistency in the sense of relative error:
\begin{equation}
\label{eq:Umino2025equation}
\frac{\|\widetilde{\Sigma}_{xy} - \Sigma_{xy}\|_F^2}{\Delta_{xy}} = o_P(1),
\qquad
\Delta_{xy} := \|\Sigma_{xy}\|_F^2.
\end{equation}

However, this result in Equation~\eqref{eq:Umino2025equation} does not hold universally. For example, in \cite{Umino2025}, consistency is established under sparsity conditions such as
\begin{equation}
\label{eq:sparsity_condition}
\begin{aligned}
    &\text{there exists a nonempty subset } C_1 \subset C \text{ such that} \\
    &\sum_{(r,s)\in C_1}\frac{\sigma^2_{(r,s)}}{\Delta_{xy}} = 1+o(1),\quad
    \liminf_{p\to\infty} \left(\min_{(r,s)\in C_1}\sigma^2_{(r,s)}\right)>0,\\
    &\max_{(r,s)\in C\setminus C_1}\sigma^2_{(r,s)}=o(1),
\end{aligned}
\end{equation}
together with conditions such as $p_1p_2/n^4 \to 0$ or $\log(p)/n \to 0$ under sub-exponential distributions.

We adopt $\widetilde{\Sigma}_{xy}$ as an estimator of the cross-covariance matrix under such structural assumptions. On the other hand, under the one-factor LISREL model, the theoretical cross-covariance matrix is given by
\[
\Sigma_{xy}(\theta,\beta)
=
\beta \Lambda_x(\theta)\Lambda_y(\theta)^\top.
\]

Based on this structure, we construct a feasible set for $(\theta,\beta)$ by imposing constraints derived from the cross-covariance structure on the feasible set $\Theta_{\mathrm{self}}$ obtained from the self-covariance component. Specifically, we define
\[
\mathcal{F}_{\mathrm{cross}}
=
\left\{
(\theta,\beta)\in \Theta_{\mathrm{self}}\times\mathbb{R}
:\ 
\frac{
\|\widetilde{\Sigma}_{xy}
-
\beta \Lambda_x(\theta)\Lambda_y(\theta)^\top
\|_F^2
}{\Delta_{xy}}
\le
\xi_n
\right\},
\]
where $\xi_n$ is a tolerance parameter that controls the allowable estimation error.

By imposing constraints based on relative error rather than absolute error, the proposed formulation effectively controls the impact of estimation error in high-dimensional settings. Furthermore, by combining the sparsity assumption induced by thresholding with the rank-1 structure implied by the one-factor model, the proposed approach achieves both stable estimation of the cross-covariance matrix and interpretable structural representation.

The threshold $\xi_n$ is determined via a bootstrap procedure based on the statistical quantities proposed in \cite{Umino2025}. Details of this procedure are provided in Section~\ref{subsection:hyperparameter}.

\subsection{Parameter Estimation}

In the previous sections, we constructed the feasible set $\mathcal{F}_{\mathrm{cross}}$ based on both the self-covariance and cross-covariance structures. In this section, we describe how the final estimator is selected from this feasible set.

As a measure of goodness-of-fit between the observed covariance matrix and the model-implied covariance matrix, we adopt the Standardized Root Mean Square Residual (SRMR). The SRMR quantifies the average magnitude of standardized residuals between the sample covariance matrix $S$ and the model covariance matrix $\Sigma(\theta,\beta)$, where smaller values indicate better model fit. Specifically, SRMR is defined as
\[
\mathrm{SRMR}(\theta,\beta)
=
\left(
\frac{2}{(p_1+p_2)(p_1+p_2+1)}
\sum_{i \le j}
\left(
\frac{
S_{ij}
-
\Sigma_{ij}(\theta,\beta)
}{
\sqrt{S_{ii}S_{jj}}
}
\right)^2
\right)^{1/2}.
\]

We define the final estimator as the parameter that minimizes SRMR over the feasible set $\mathcal{F}_{\mathrm{cross}}$, i.e.,
\[
(\widehat{\theta}, \widehat{\beta})
=
\operatorname*{arg\,min}_{(\theta,\beta)\in \mathcal{F}_{\mathrm{cross}}}
\mathrm{SRMR}(\theta,\beta).
\]

This formulation selects the parameter that achieves the best global fit to the covariance structure among those satisfying the constraints derived from both self-covariance and cross-covariance components. By restricting the search to the feasible set, the proposed method mitigates the effects of local estimation errors and optimization instability, while balancing statistical consistency and practical goodness-of-fit.

\subsection{Statistical Inference}
\label{subsection:test}

In the proposed method, the estimator involves nonlinear and discontinuous operations, such as sparsification and feasibility-based selection. As a result, conventional Wald-type inference based on asymptotic normality is not applicable. Moreover, due to the combination of feasible set construction and nonconvex optimization, deriving the asymptotic distribution of the estimator analytically is highly challenging.

To address this issue, we employ a bootstrap procedure to evaluate the uncertainty of the coefficient $\beta$.

Specifically, we generate bootstrap samples $\{\mathbf{z}_i^{*(b)}\}_{i=1}^n$ by resampling with replacement from the observed data $\{\mathbf{z}_i\}_{i=1}^n$. For each bootstrap sample, we apply the proposed method to obtain the estimator $\hat{\beta}^{*(b)}$. That is, for each $b = 1,\dots,B$, we compute
\[
\hat{\beta}^{*(b)}
=
\operatorname*{arg\,min}_{(\theta,\beta)\in \mathcal{F}_{\mathrm{cross}}^{*(b)}}
\mathrm{SRMR}(\theta,\beta).
\]

Based on the empirical distribution of $\{\hat{\beta}^{*(b)}\}_{b=1}^B$, confidence intervals can be constructed, for example, using the percentile method. Hypothesis testing for $H_0:\beta=0$ is then conducted by examining whether zero is contained within the resulting confidence interval.

\subsection{Hyperparameter Tuning}
\label{subsection:hyperparameter}

\subsubsection{Estimation of $\epsilon_n$}
\label{subsubsection:epsilon}

The parameter $\epsilon_n$ is introduced to quantify the estimation error in the self-covariance component. For each bootstrap sample $(X^{(b)}, Y^{(b)})$, we compute the corresponding maximum likelihood estimator $\theta^{(b)}$, and define
\[
\Delta_b
=
\ell_{\text{self}}(\widehat{\theta}; S^{(b)})
-
\ell_{\text{self}}(\widehat{\theta}^{(b)}; S^{(b)}),
\]
where $S^{(b)}$ denotes the covariance matrix computed from the $b$-th bootstrap sample.

We then define $\epsilon_n$ as the $(1-\alpha)$-quantile of $\{\Delta_b\}_{b=1}^{B}$:
\[
\epsilon_n
=
\operatorname{Quantile}_{1-\alpha}
\left(
\{\Delta_b\}_{b=1}^{B}
\right).
\]

\subsubsection{Estimation of $\xi_n$}
\label{subsubsection:xi}

Next, $\xi_n$ represents a probabilistic tolerance level that controls the allowable deviation in the cross-covariance component. We define this tolerance in terms of the upper tail probability of the true error distribution. Let
\[
T_n
=
\frac{
\|\widetilde{\Sigma}_{xy}
-
\Sigma_{xy}
\|_F^2
}{
\Delta_{xy}
},
\]
then $\xi_n$ is defined as
\[
\xi_n
=
\inf
\left\{
t :
\mathrm{Pr}(T_n \le t) \ge 1-\alpha
\right\}.
\]

Under the LISREL model with Gaussian assumptions, the data belong to the sub-exponential family. Therefore, it suffices to consider the regime $\log(p)/n \to 0$ as $p,n\to\infty$. In particular, we focus on the following result from \cite{Umino2025}:
\begin{equation}
\label{Umino:eq4}
\frac{\|\widetilde{\Sigma}_{xy}-\Sigma_{xy}\|_F^2}{\Delta_{xy}}
=
O_P\left(
\eta
+
\frac{\log (p)}{n}
+
\frac{\operatorname{tr}(\Sigma^2_{xx})^{1/4}\operatorname{tr}(\Sigma^2_{yy})^{1/4}}{\sqrt{n \Delta_{xy}}}
\right).
\end{equation}

Here, $\eta$ is defined as
\begin{equation}
\label{Umino:eq5}
\eta
=
\sum_{(r,s)\in C_2}
\frac{\sigma^2_{(r,s)}}{\Delta_{xy}},
\end{equation}
where $C_2 = C \setminus C_1$.

Equation~\eqref{Umino:eq4} shows that the relative error consists of multiple components. Specifically, the first term corresponds to $\eta$, representing the proportion of energy removed by thresholding. The second term, proportional to $\log(p)/n$, reflects the estimation error in high-dimensional settings. The third term depends on $\operatorname{tr}(\Sigma_{xx}^2)$ and $\operatorname{tr}(\Sigma_{yy}^2)$, capturing the interaction between covariance magnitude and sample size.

Under the sparsity condition~\eqref{eq:sparsity_condition}, the energy of the cross-covariance matrix is concentrated on a finite number of dominant components, implying $\eta \to 0$. Thus, $\eta$ becomes asymptotically negligible and can be absorbed into higher-order error terms.

Based on this result, we define $\xi_n$ according to the rate in Equation~\eqref{Umino:eq4}. The quantities $\operatorname{tr}(\Sigma_{xx}^2)$ and $\operatorname{tr}(\Sigma_{yy}^2)$ can be consistently estimated using the $W_n$-type estimator proposed in \cite{yata2013correlation}:
\begin{equation}
\label{yata2013estimator}
W_n
=
\frac{2u_n}{n(n-1)}
\sum_{i<j}
\left[
\left(x_i - \bar{x}_{ij}^{(1)}\right)^\top
\left(x_j - \bar{x}_{ij}^{(2)}\right)
\right]^2,
\end{equation}
where $u_n=\frac{n_1n_2}{(n_1-1)(n_2-1)}$ is a finite-sample correction factor, and $\bar{x}_{ij}^{(1)}, \bar{x}_{ij}^{(2)}$ denote sample means computed from two disjoint subsets constructed so that $x_i$ and $x_j$ belong to different subsets.

Furthermore, by employing an estimator $\widehat{\Delta}_{xy}$ of $\Delta_{xy}$ based on the ECDM method, $\xi_n$ can be explicitly written as
\begin{equation}
\label{eq:eta_n}
\xi_n
=
C\left(
\frac{\log(p)}{n}
+
\frac{W_1^{1/4}W_2^{1/4}}{\sqrt{n\widehat{\Delta}_{xy}}}
\right),
\end{equation}
where $C>0$ is a constant.

The above formulation holds under asymptotic conditions. In finite samples, the constant $C$ in the theoretical upper bound is unknown. Therefore, instead of specifying a fixed threshold manually, we adopt a data-driven approach that simultaneously determines the sparse structure and the error bound.

First, for the observed data $\{(\mathbf{x}_i,\mathbf{y}_i)\}_{i=1}^n$, we construct an estimator $\widehat{\Delta}_{xy}$ of the signal strength $\Delta_{xy} = \|\Sigma_{xy}\|_F^2$ using the ECDM method proposed in \cite{Umino2025}.

We then sort the entries of the sample cross-covariance matrix $S_{xy}$ in descending order of magnitude, and retain components until the cumulative squared sum exceeds $\widehat{\Delta}_{xy}$, thereby constructing a sparse estimator $\widetilde{\Sigma}_{xy}$. This corresponds to sparsification based on energy preservation rather than explicit thresholding.

Next, to evaluate the probabilistic scale of the estimation error, we employ a bootstrap procedure. For each bootstrap sample $b=1,\dots,B$, we compute $\widetilde{\Sigma}_{xy}^{(b)}$ and $\widehat{\Delta}_{xy}^{(b)}$ in the same manner, and define
\begin{equation}
\label{eq:T_bootstrap}
T^{(b)}
=
\frac{
\left\|
\widetilde{\Sigma}_{xy}^{(b)}
-
\widetilde{\Sigma}_{xy}
\right\|_F^2
}{
\widehat{\Delta}_{xy}^{(b)}
}.
\end{equation}

We further define the theoretical rate
\begin{equation}
\label{eq:r_np}
r_{n,p}
=
\frac{\log p}{n}
+
\frac{
W_1^{1/4}W_2^{1/4}
}{
\sqrt{n\widehat{\Delta}_{xy}}
}.
\end{equation}

Let $q_{1-\alpha}$ denote the $(1-\alpha)$ upper quantile of $\{T^{(b)}\}_{b=1}^B$:
\begin{equation}
\label{eq:q_upper}
q_{1-\alpha}
=
\operatorname{Quantile}_{1-\alpha}
\left(
T^{(1)},\dots,T^{(B)}
\right).
\end{equation}

Based on consistency with the theoretical rate, we define an estimator of the constant as
\begin{equation}
\label{eq:C_hat}
\widehat{C}
=
\frac{q_{1-\alpha}}{r_{n,p}}.
\end{equation}

To avoid overestimation due to small values of $\widehat{\Delta}_{xy}^{(b)}$ in finite samples, we impose an upper bound and define
\begin{equation}
\label{eq:C_trunc}
\widehat{C}_{\mathrm{trunc}}
=
\min\left(\widehat{C},\, C_{\max}\right).
\end{equation}

Finally, the estimated tolerance parameter is given by
\begin{equation}
\label{eq:estimate_eta_n}
\widehat{\xi}_n
=
\widehat{C}_{\mathrm{trunc}}
\left(
\frac{\log(p)}{n}
+
\frac{W_1^{1/4}W_2^{1/4}}{\sqrt{n\widehat{\Delta}_{xy}}}
\right).
\end{equation}

\subsection{Overview of the Proposed Method}

Figure~\ref{fig:overview} illustrates the overall framework of the proposed method. The method adopts a staged estimation procedure based on the distinct roles of self-covariance and cross-covariance.

\begin{figure}[h]
\centering
\includegraphics[width=0.9\linewidth]{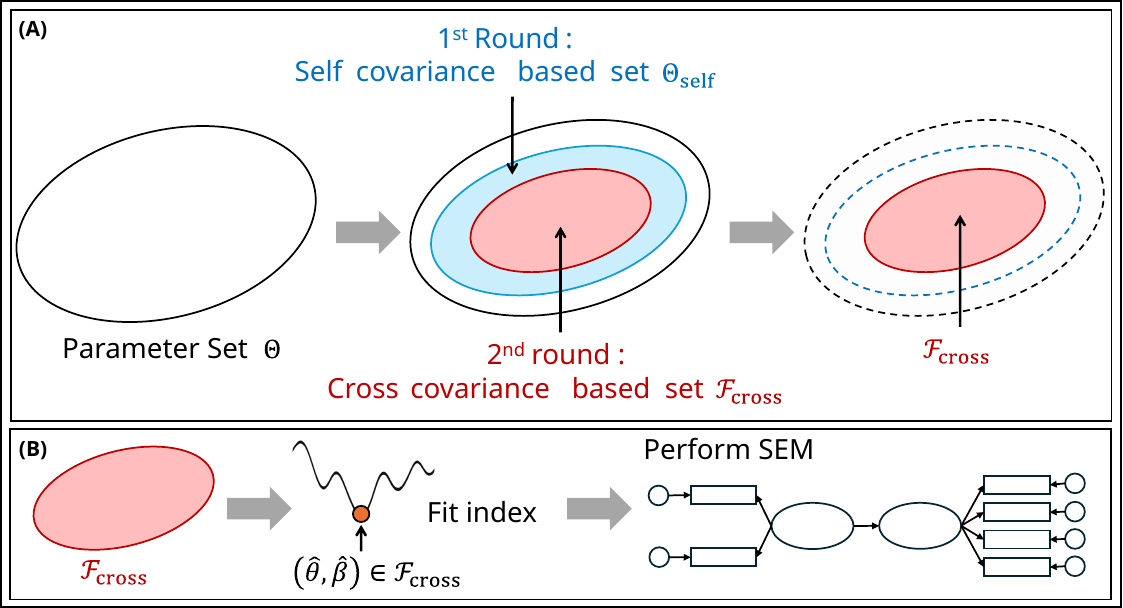}
\caption{Overview of the proposed method. The estimation procedure consists of three stages: (i) candidate generation based on self-covariance, (ii) filtering using cross-covariance, and (iii) final selection via SRMR minimization.}
\label{fig:overview}
\end{figure}

First, a factor model is applied to the self-covariance components to generate a set of candidate parameters that can be stably estimated. Next, this candidate set is refined by imposing consistency with the cross-covariance structure. Finally, the optimal parameter is selected based on a goodness-of-fit criterion, namely the Standardized Root Mean Square Residual (SRMR).

By explicitly separating the roles of the self-covariance and cross-covariance components, the proposed method effectively mitigates the instability inherent in estimation under small-sample settings with $p > n$.

The above estimation procedure is summarized as an algorithm from a computational perspective in Algorithm~\ref{alg:proposed_estimation}. The algorithm explicitly describes the sequence of operations, including candidate generation based on self-covariance, constraint imposition using cross-covariance, and final selection via a goodness-of-fit criterion, thereby providing a reproducible implementation of the proposed method.

\begin{algorithm}[h]
\caption{Proposed estimation procedure}
\label{alg:proposed_estimation}
\begin{algorithmic}[1]
\Require Observations $\{(\mathbf{x}_i,\mathbf{y}_i)\}_{i=1}^n$, bootstrap size $B$
\Ensure $(\widehat{\theta}, \widehat{\beta})$

\State Compute
\[
S =
\begin{pmatrix}
S_{xx} & S_{xy} \\
S_{yx} & S_{yy}
\end{pmatrix}
\]

\State $\widehat{\theta}_{\mathrm{self}}
\gets
\arg\min_{\theta \in \Theta_{\mathrm{param}}^{\mathrm{self}}}
\ell_{\mathrm{self}}(\theta)$

\State $\epsilon_n \gets \text{Bootstrap}(\ell_{\mathrm{self}})$

\State
\[
\Theta_{\mathrm{self}}
\gets
\left\{
\theta :
\ell_{\mathrm{self}}(\theta)
\le
\ell_{\mathrm{self}}(\widehat{\theta}_{\mathrm{self}})
+
\epsilon_n
\right\}
\]

\State $(\widetilde{\Sigma}_{xy}, \widehat{\Delta}_{xy}) \gets (\text{Sparse}(S_{xy}), \text{ECDM}(S_{xy}))$

\State $\xi_n \gets \text{Bootstrap}(\widetilde{\Sigma}_{xy}, \widehat{\Delta}_{xy})$

\State
\[
\mathcal{F}_{\mathrm{cross}}
\gets
\left\{
(\theta,\beta) :
\theta \in \Theta_{\mathrm{self}},
\;
\frac{
\left\|
\widetilde{\Sigma}_{xy}
-
\beta \Lambda_x(\theta)\Lambda_y(\theta)^\top
\right\|_F^2
}{
\widehat{\Delta}_{xy}
}
\le
\xi_n
\right\}
\]

\State
\[
(\widehat{\theta},\widehat{\beta})
\gets
\arg\min_{(\theta,\beta)\in \mathcal{F}_{\mathrm{cross}}}
\mathrm{SRMR}(\theta,\beta)
\]

\State \Return $(\widehat{\theta}, \widehat{\beta})$
\end{algorithmic}
\end{algorithm}

\section{Simulation Study}

We conduct simulations under two scenarios: (i) data generated under conditions where the assumptions in \cite{Umino2025} hold, and (ii) data generated under violations of these structural assumptions to evaluate robustness. We refer to the former as Case 1 and the latter as Case 2.

\subsection{Case 1}
\label{subsection:pattern1}

We generate synthetic data according to a single-factor LISREL model. Specifically, let $\{x_1,\cdots ,x_n\}=X\in\mathbb{R}^{n\times p_1}$ and $\{y_1,\cdots, y_n\}=Y\in\mathbb{R}^{n\times p_2}$, where $x\in X$ and $y\in Y$. The data-generating process is defined as
\begin{equation}
    x = \Lambda_x \xi + \delta,\quad
    y = \Lambda_y \eta + \epsilon,\quad
    \eta = \beta \xi + \zeta.
\end{equation}

Here, $\xi,\eta\in\mathbb{R}$ are latent variables, $\Lambda_x\in \mathbb{R}^{p_1}$ and $\Lambda_y\in \mathbb{R}^{p_2}$ are factor loading vectors, and $\beta$ is the structural coefficient. The noise terms $\delta, \epsilon,$ and $\zeta$ are mutually independent Gaussian random variables with mean zero.

To incorporate sparsity in the factor loadings, we assign strong signals to a finite number of leading components, while the remaining components are set to small values. Specifically, we define
\begin{equation}
\label{Pattern1:model}
\lambda_{x,r} =
\begin{cases}
1 & r=1,2, \\
a\cdot p_1^{-\alpha} & r \ge 3,
\end{cases}
\quad
\lambda_{y,s} =
\begin{cases}
1 & s=1,2, \\
b\cdot p_2^{-\alpha} & s \ge 3,
\end{cases}
\quad
a,b>0,\ \frac{1}{2}<\alpha<1.
\end{equation}

This construction satisfies the sparsity condition~\eqref{eq:sparsity_condition}. Details are provided in Appendix~\ref{appendixA}.

In the simulations, we adopt the following parameter configuration:
\begin{align}
\lambda_{x,r} &=
\begin{cases}
1 & r = 1,2, \\
0.3\,p_1^{-3/4} & r \ge 3,
\end{cases}
\qquad
\lambda_{y,s} =
\begin{cases}
1 & s = 1,2, \\
0.3\,p_2^{-3/4} & s \ge 3,
\end{cases}\\
\Theta_{\delta} &= 0.5\, I_{p_1}, \qquad
\Theta_{\varepsilon} = 0.5\, I_{p_2}, \qquad
\psi = 0.4.
\end{align}

Under this setup, the data are governed by a small number of dominant common factors while maintaining homogeneous noise across all variables. This allows us to construct a data-generating model that simultaneously captures sparsity and positive definiteness.

To investigate the effect of high-dimensional settings, we fix the sample size at $n=10$ and vary the dimensionality as $p_1=p_2=2,3,\cdots,9$. We then classify the experimental results based on the following two structural regimes.

We consider two settings defined by the relationship between the sample size $n$ and the dimensions $p_1, p_2$. Setting 1 corresponds to $n > p_1 + p_2$. In this regime, the sample covariance matrix is full-rank, and all methods, including conventional SEM, can be applied stably.

Setting 2 corresponds to $n > p_1, n > p_2$, but $n \le p_1 + p_2$. In this case, the self-covariance matrices of each block remain nonsingular, whereas the full covariance matrix becomes singular. As a result, conventional SEM methods that rely on the joint covariance structure become unstable, while the proposed method, which exploits the self-covariance structure, is expected to demonstrate its advantage.

By analyzing these two settings, our objective is not merely to observe performance degradation as dimensionality increases, but rather to identify the structural conditions under which each method fails. Thus, the focus of this study is not solely on estimation accuracy, but on evaluating the effectiveness of each method from the perspective of structural identifiability.

The classification of these settings is summarized in Table~\ref{tab:case_definition}.

\begin{table}[h]
\centering
\caption{Structural regimes based on dimensionality conditions}
\label{tab:case_definition}
\renewcommand{\arraystretch}{1.2}
\begin{tabular}{c c c c}
\hline
Setting & Condition & Structural property & $p_1,p_2$ \\ \hline

Setting 1
& $n > p_1 + p_2$
& Full covariance matrix is nonsingular
& $2,3,4$ \\

Setting 2
& \begin{tabular}{c}
$n > p_1,\; n > p_2,$ \\
$n \le p_1 + p_2$
\end{tabular}
& \begin{tabular}{c}
Self-covariance is nonsingular, \\
but full covariance is singular
\end{tabular}
& $5,6,7,8,9$ \\

\hline
\end{tabular}
\end{table}

\subsubsection{Comparison Metrics}

We compare the proposed method with conventional SEM, L1-regularized SEM (L1-SEM), and L2-regularized SEM (L2-SEM). For L1-SEM and L2-SEM, the regularization parameter is selected from the set $\{0.001, 0.01, 0.1, 1, 10, 100, 1000\}$.

In small-sample settings, cross-validation tends to be unstable. Therefore, instead of cross-validation, we evaluate each candidate parameter using five independent trials based on different random seeds. The root mean squared error (RMSE) is computed for each candidate, and the parameter achieving the smallest RMSE is selected as the final regularization parameter.

In regularized SEM, the choice of the regularization parameter has a significant impact on estimation accuracy. Thus, in this study, the parameter is determined via pre-tuning over a candidate set. However, such tuning procedures are inherently data-dependent and may not be stable in practical applications.

In contrast, the proposed method does not rely on hyperparameter tuning and provides consistent estimation without requiring parameter selection. Therefore, the comparison in this study should be interpreted as highlighting the differences in estimation characteristics between tuning-dependent methods and tuning-free methods.

A summary of the compared methods is provided in Table~\ref{tab:comparison_methods}.

\begin{table}[h]
\centering
\caption{Summary of comparison methods. A checkmark ($\checkmark$) indicates that the corresponding component is required.}
\label{tab:comparison_methods}
\renewcommand{\arraystretch}{1.2}
\begin{tabular}{lccc}
\hline
Method & Regularization & Tuning & Positive definiteness assumption \\ \hline

SEM & -- & -- & $\checkmark$ \\

L1-SEM & $\checkmark$ & $\checkmark$ & $\checkmark$ \\

L2-SEM & $\checkmark$ & $\checkmark$ & $\checkmark$ \\

Proposed method & -- & -- & -- \\

\hline
\end{tabular}
\end{table}

Furthermore, in the $n < p$ regime, the sample covariance matrix becomes singular, making conventional SEM difficult to apply directly. In practical implementations, following the semopy framework, eigenvalue correction is applied to enforce positive definiteness of the covariance matrix before estimation. While this enables computation, it may distort the original covariance structure. For details, see \url{https://semopy.com/ordinal.html}.

\subsubsection{Evaluation Criteria}

To evaluate both the feasibility and accuracy of estimation in high-dimensional small-sample settings, we first assess whether estimation is valid, and then evaluate estimation accuracy only on valid trials.

We define the valid rate as
\begin{align}
\label{eq:valid}
\mathrm{Valid\ Rate}
=
\frac{1}{M}
\sum_{m=1}^M
\mathbf{1}\left(\text{estimation in trial } m \text{ is valid}\right),
\end{align}
where $\mathbf{1}(\cdot)$ denotes the indicator function. In this study, only trials in which estimation is numerically and statistically feasible are considered valid, and subsequent evaluation of estimation accuracy is restricted to these trials.

Since conventional SEM requires the sample covariance matrix to be positive definite, trials in which the original sample covariance matrix is not positive definite are treated as invalid. Although semopy may proceed with estimation using nearPD corrections, such corrections alter the original covariance structure and do not imply that the conventional method is directly applicable to the observed data. Therefore, such cases are excluded from valid trials in our evaluation.

Next, to evaluate the estimation accuracy of the structural coefficient $\beta$, we use the estimators $\{\hat{\beta}^{(m)}\}_{m \in \mathcal{V}}$ obtained from Monte Carlo simulations, where $\mathcal{V}$ denotes the set of valid trials. The mean is defined as
\begin{align}
\bar{\beta}
=
\frac{1}{|\mathcal{V}|}
\sum_{m \in \mathcal{V}}
\hat{\beta}^{(m)}.
\end{align}
We then consider the following metrics:
\begin{align}
\mathrm{Bias} &= \bar{\beta} - \beta,\\
\mathrm{Var} &=
\frac{1}{|\mathcal{V}|}
\sum_{m \in \mathcal{V}}
\left(
\hat{\beta}^{(m)} - \bar{\beta}
\right)^2,\\
\mathrm{RMSE} &=
\sqrt{
\frac{1}{|\mathcal{V}|}
\sum_{m \in \mathcal{V}}
\left(
\hat{\beta}^{(m)} - \beta
\right)^2
}.
\end{align}

RMSE serves as a comprehensive measure of estimation accuracy, reflecting both bias and variance. Bias evaluates systematic error, while variance measures the stability of the estimator. In addition, confidence intervals for these metrics are constructed via bootstrap to visualize uncertainty arising from finite sample sizes.

In this simulation study, we conduct 100 Monte Carlo repetitions. For each repetition, the initialization is performed 10 times, and bootstrap procedures for both the self-covariance and cross-covariance components are conducted 10 times each.

\subsection{Case 2}

In Section~\ref{subsection:pattern1}, we assumed that a finite number of components dominate the factor loading vectors, while the remaining components decay as noise. In this section, we investigate deviations from this assumption. Specifically, as Case 2, we evaluate the behavior of the estimator under gradual violations of sparsity.

We consider the following model:
\begin{equation}
\lambda_{x,r} =
\begin{cases}
1 & r=1,2, \\
a\, p_1^{-\alpha}\cdot \eta_r & r \ge 3,
\end{cases}
\qquad
\lambda_{y,s} =
\begin{cases}
1 & s=1,2, \\
b\, p_2^{-\alpha}\cdot \zeta_s & s \ge 3,
\end{cases}
\qquad
a,b>0,\ \frac{1}{2}<\alpha<1.
\end{equation}

This model extends Equation~\eqref{Pattern1:model} by introducing additional random variables $\eta_r$ and $\zeta_s$, allowing flexible control over the tail behavior of the factor loadings. As a result, the contribution of the non-dominant components can be varied continuously, enabling a systematic evaluation of the breakdown of sparsity.

In this study, we introduce random fluctuations in the tail components by specifying $\eta_r$ and $\zeta_s$ as follows:
\begin{align}
\lambda_{x,r} &=
\begin{cases}
1 & r = 1,2, \\
p_1^{-3/4}\cdot \eta_r & r \ge 3,
\end{cases}
\qquad
\lambda_{y,s} =
\begin{cases}
1 & s = 1,2, \\
p_2^{-3/4}\cdot \zeta_s & s \ge 3,
\end{cases}\\
\Theta_{\delta} &= 0.5\, I_{p_1}, \qquad
\Theta_{\varepsilon} = 0.5\, I_{p_2}, \qquad
\psi = 0.4,\\
\eta_r &\sim \mathrm{Uniform}(1, p_1^{7/4}),\quad
\zeta_s \sim \mathrm{Uniform}(1, p_2^{7/4}).
\end{align}

When $\eta_r=\zeta_s=1$, this model reduces to Equation~\eqref{Pattern1:model}, representing the ideal sparse setting. In contrast, when $\eta_r=p_1^{7/4}$ and $\zeta_s=p_2^{7/4}$, the tail components become of order one, violating the sparsity assumption. This formulation introduces variability in the magnitude of the tail components and enables a gradual transition from sparse to non-sparse structures.

Based on this setup, we conduct a sensitivity analysis to evaluate the relationship between the degree of sparsity and estimation error. The sample size is fixed at $n=7$, and we consider $p_1=p_2=2,3,4,5,6$.

\subsubsection{Comparison Metrics and Evaluation}

As in Case 1, we compare four methods: SEM, L1-SEM, L2-SEM, and the proposed method. The objective of this sensitivity analysis is to assess the robustness of the proposed estimator with respect to violations of the sparsity assumption, particularly in terms of the decay behavior of the tail components.

In the scatter plots, the horizontal axis represents the energy concentration ratio
\[
\frac{\sum_{(r,s)\in C_1}\sigma^2}{\Delta_{xy}},
\]
which quantifies the proportion of total energy explained by the dominant components. Since the maximum value of the factor loadings is normalized to one in this study, strong sparsity leads to concentration of energy in a small number of components, resulting in values close to one. In contrast, in non-sparse settings, energy is more widely distributed, and the ratio approaches zero.

More specifically, when $\eta_r,\zeta_s = O(1)$, the energy is concentrated in the dominant components, and the energy concentration ratio approaches one. On the other hand, when $\eta_r=p_1^{7/4}$ and $\zeta_s=p_2^{7/4}$, the tail components become large, leading to dispersion of energy and a decrease in the concentration ratio toward zero.

The vertical axis represents the estimation error $\widehat{\beta}^{(m)} - \beta$ obtained from Monte Carlo simulations. This allows us to visualize how estimation accuracy changes as the degree of energy concentration decreases.

In particular, this analysis enables a comparison of whether estimation accuracy deteriorates abruptly or gradually as sparsity weakens and the contribution of tail components increases.

In this simulation, we perform 100 Monte Carlo repetitions. For each repetition, initialization is performed 10 times, and bootstrap procedures for both the self-covariance and cross-covariance components are conducted 10 times each.

\section{Results}

\subsection{Case 1}

In this section, we first summarize the valid rate to demonstrate the applicability of each method. Next, to clarify the impact of structural conditions on estimation accuracy, we compare the distributions of $\hat{\beta}^{(m)}$ obtained from Monte Carlo simulations, along with Bias, Variance, and RMSE, under Setting 1 and Setting 2. Finally, to capture overall trends that cannot be fully characterized within each setting, we aggregate results across all dimensional configurations and provide an integrated analysis of the behavior of the proposed method.

\subsubsection{Valid Rate}

Table~\ref{tab:valid_rate} presents the valid rate under each dimensional setting. The valid rate is computed according to Equation~\eqref{eq:valid} and represents the proportion of trials in which estimation was numerically successful and yielded finite estimates for each method.

\begin{table}[h]
\centering
\caption{Valid rate of each method across different dimensional settings ($p_1 = p_2$, $n=10$). In Setting 1 ($n > p_1 + p_2$), all methods achieve valid estimation. In contrast, in Setting 2 ($n > p_1, n > p_2$, but $n \leq p_1 + p_2$), conventional SEM and regularized SEM (L1-SEM, L2-SEM) completely fail due to the singularity of the sample covariance matrix, whereas the proposed method maintains a valid rate of 1.00 across all dimensions.}
\label{tab:valid_rate}
\renewcommand{\arraystretch}{1.2}
\begin{tabular}{c c cccc}
\hline
$p_1 = p_2$ & Setting & SEM & L1-SEM & L2-SEM & Proposed \\ \hline

2 & \multirow{3}{*}{Setting 1} & 1.00 & 1.00 & 1.00 & 1.00 \\
3 &                            & 1.00 & 1.00 & 1.00 & 1.00 \\
4 &                            & 1.00 & 1.00 & 1.00 & 1.00 \\

\hdashline

5 & \multirow{5}{*}{Setting 2} & 0.00 & 0.00 & 0.00 & 1.00 \\
6 &                            & 0.00 & 0.00 & 0.00 & 1.00 \\
7 &                            & 0.00 & 0.00 & 0.00 & 1.00 \\
8 &                            & 0.00 & 0.00 & 0.00 & 1.00 \\
9 &                            & 0.00 & 0.00 & 0.00 & 1.00 \\

\hline
\end{tabular}
\end{table}

From Table~\ref{tab:valid_rate}, it is observed that the proposed method maintains a valid rate of 100\% across all dimensional settings. In contrast, although SEM, L1-SEM, and L2-SEM achieve valid estimation in Setting 1, their valid rates drop to zero in Setting 2 (i.e., $p_1 = p_2 \geq 5$), indicating that estimation fails entirely in this regime.

These results demonstrate that the proposed method preserves feasibility even in $n < p$ settings, whereas conventional covariance-based SEM methods become inapplicable under the same conditions. This highlights a fundamental difference in applicability between the proposed method and existing approaches.

In the following, we present visualizations of $\hat{\beta}^{(m)}$ obtained from Monte Carlo simulations, along with comparisons of Bias, Variance, and RMSE for each setting.

\subsubsection{Setting 1}

We first visualize the distribution of $\hat{\beta}^{(m)}$ obtained from Monte Carlo simulations in Figure~\ref{fig:performance_case1_signed}. The left panel shows results including outliers, while the right panel excludes outliers.

\begin{figure}[h]
\centering
\includegraphics[width=\linewidth]{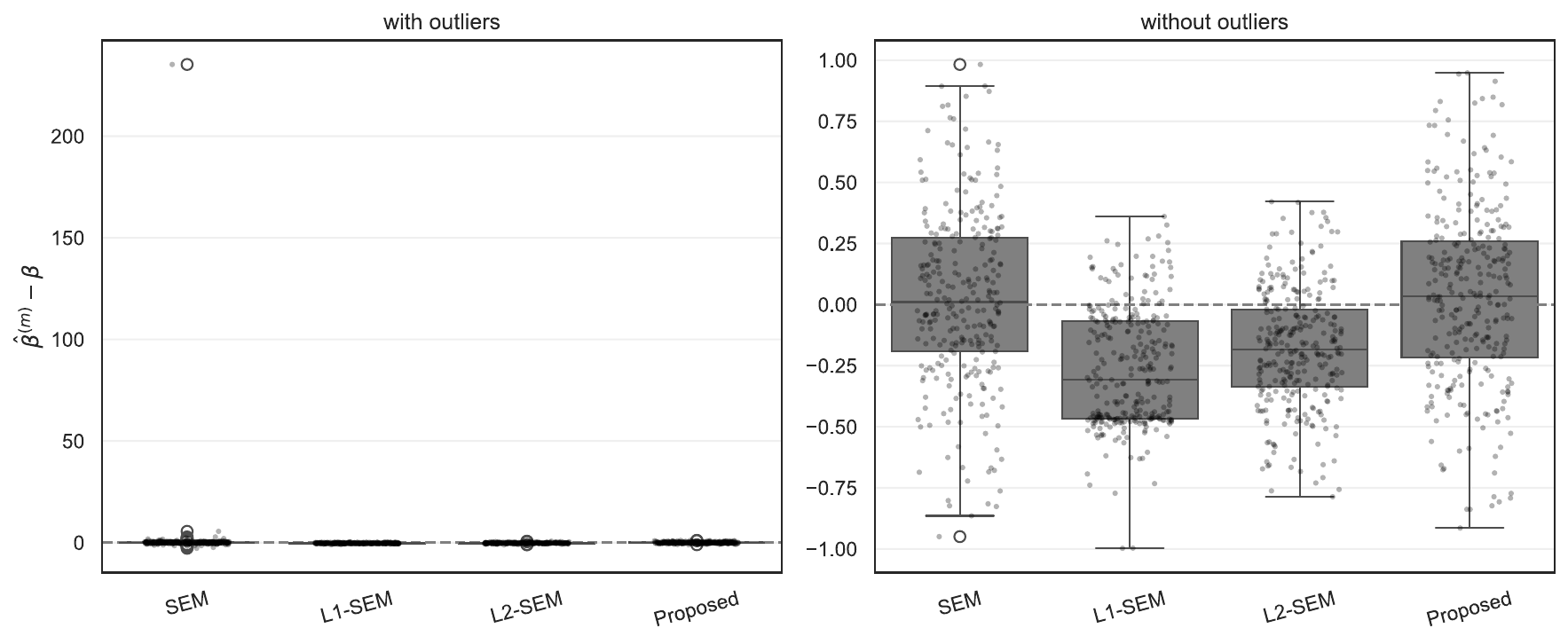}
\caption{Boxplots of estimation errors $\hat{\beta}^{(m)} - \beta$ obtained from Monte Carlo simulations. The left panel includes outliers, while the right panel excludes them. Methods are shown from left to right as SEM, L1-SEM, L2-SEM, and the proposed method.}
\label{fig:performance_case1_signed}
\end{figure}

From Figure~\ref{fig:performance_case1_signed}, several observations can be made regarding the distribution of estimation errors in Setting 1. First, in the left panel including outliers, SEM exhibits extremely large outliers, resulting in a heavy-tailed distribution compared to other methods. In contrast, L1-SEM, L2-SEM, and the proposed method show relatively well-controlled distributions, with most estimation errors concentrated around zero.

Next, focusing on the right panel without outliers, the central tendency of each method becomes clearer. Both SEM and the proposed method exhibit medians close to zero, indicating that the estimated values are, on average, close to the true parameter. On the other hand, L1-SEM and L2-SEM exhibit systematic negative bias, with their distributions shifted toward negative values, implying consistent underestimation.

Overall, in Setting 1, both SEM and the proposed method achieve estimates centered around the true value. However, SEM is highly sensitive to outliers, whereas the proposed method maintains stability. In contrast, L1-SEM and L2-SEM effectively suppress variance but introduce systematic bias.

\begin{table}[h]
\centering
\caption{Performance metrics of each method across different dimensionalities. For each method and dimension $p$, the table reports Bias, Variance (Var), and Root Mean Squared Error (RMSE), along with their corresponding bootstrap confidence intervals (CI). Bias represents systematic deviation from the true parameter, Variance measures the dispersion of the estimator, and RMSE summarizes overall estimation error by combining both Bias and Variance. These metrics should be interpreted jointly to assess the trade-off between accuracy (low Bias), stability (low Variance), and overall performance (low RMSE) across methods.}
\label{tab:performance_metrics}
\renewcommand{\arraystretch}{1.2}
\begin{tabular}{c c c c c c c c}
\hline
Method & $p$
& Bias & Bias CI
& Var & Var CI
& RMSE & RMSE CI \\
\hline
SEM & 2 & 0.133 & (0.064, 0.209) & 0.205 & (0.108, 0.300) & 0.470 & (0.338, 0.576) \\
L1-SEM & 2 & -0.108 & (-0.149, -0.068) & 0.041 & (0.028, 0.064) & 0.228 & (0.184, 0.291) \\
L2-SEM & 2 & 0.010 & (-0.034, 0.071) & 0.076 & (0.052, 0.105) & 0.275 & (0.228, 0.326) \\
Proposed & 2 & 0.108 & (0.047, 0.179) & 0.142 & (0.106, 0.173) & 0.391 & (0.333, 0.431) \\
\hline
SEM & 3 & -0.022 & (-0.126, 0.064) & 0.340 & (0.192, 0.535) & 0.581 & (0.442, 0.729) \\
L1-SEM & 3 & -0.219 & (-0.267, -0.171) & 0.067 & (0.052, 0.086) & 0.338 & (0.292, 0.383) \\
L2-SEM & 3 & -0.273 & (-0.315, -0.238) & 0.044 & (0.034, 0.054) & 0.344 & (0.305, 0.382) \\
Proposed & 3 & 0.009 & (-0.068, 0.082) & 0.163 & (0.116, 0.200) & 0.402 & (0.341, 0.445) \\
\hline
SEM & 4 & 2.380 & (-0.132, 7.163) & 554.121 & (0.283, 1628.785) & 23.542 & (0.530, 40.763) \\
L1-SEM & 4 & -0.465 & (-0.471, -0.460) & 0.001 & (0.001, 0.001) & 0.467 & (0.461, 0.472) \\
L2-SEM & 4 & -0.263 & (-0.306, -0.228) & 0.040 & (0.032, 0.049) & 0.330 & (0.297, 0.367) \\
Proposed & 4 & -0.008 & (-0.084, 0.066) & 0.135 & (0.101, 0.173) & 0.365 & (0.320, 0.416) \\
\hline
\end{tabular}
\end{table}

Table~\ref{tab:performance_metrics} reports Bias, Variance, and RMSE with confidence intervals for each method. For $p=2,3$, L1-SEM and L2-SEM achieve low variance and small RMSE due to regularization, but exhibit consistent negative bias. In contrast, the proposed method maintains near-zero bias, although with higher variance, resulting in moderate RMSE.

At the boundary case $p=4$, the variance and RMSE of SEM increase dramatically, with extremely wide confidence intervals, indicating severe instability. This behavior arises from the singularity of the sample covariance matrix in high-dimensional small-sample settings, demonstrating structural failure of conventional SEM. While L1-SEM and L2-SEM remain stable, they continue to exhibit strong bias. In contrast, the proposed method maintains near-zero bias and controlled variance, achieving a balance between stability and structural validity.

\subsubsection{Setting 2}

We next visualize $\hat{\beta}^{(m)}$ in Figure~\ref{fig:performance_case2_signed}. Since only the proposed method is applicable in Setting 2 (as shown in Table~\ref{tab:valid_rate}), boxplots are presented only for the proposed method.

\begin{figure}[h]
\centering
\includegraphics[width=\linewidth]{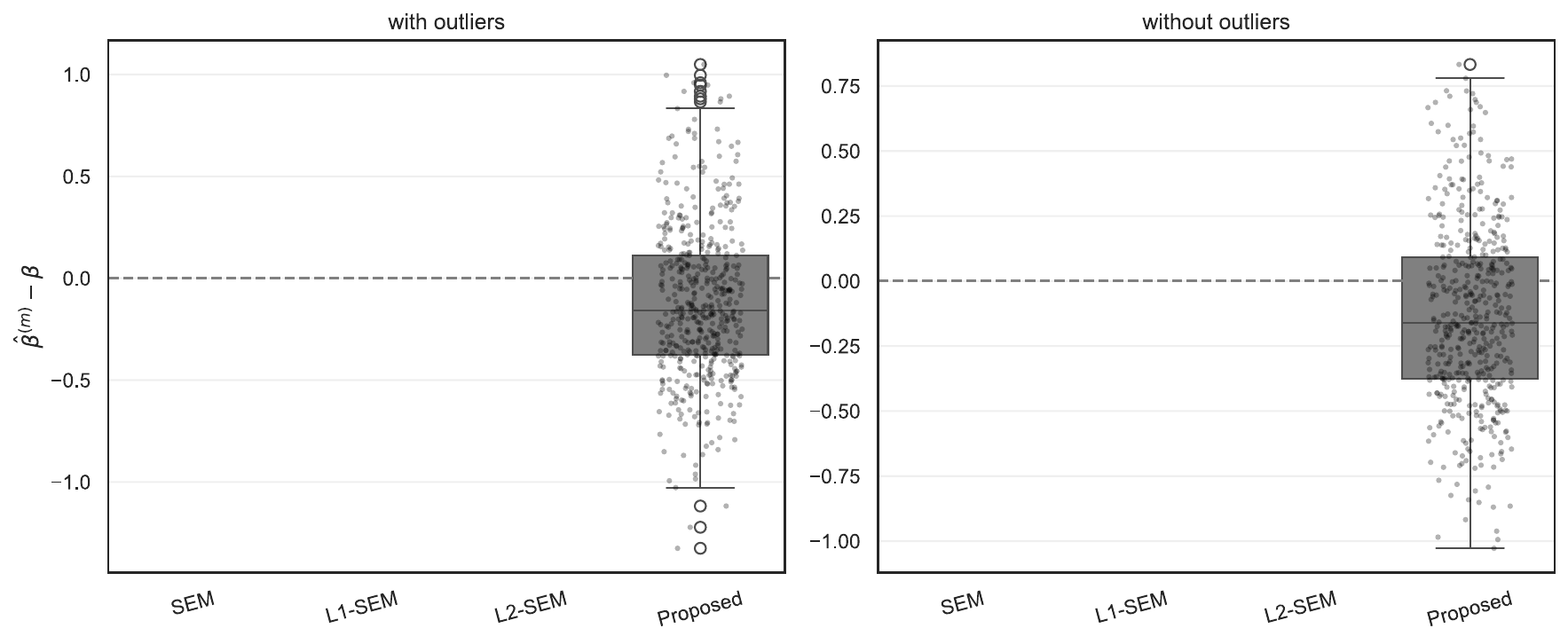}
\caption{Boxplots of estimation errors $\hat{\beta}^{(m)} - \beta$ in Setting 2. The left panel includes outliers, and the right panel excludes them. Note that SEM, L1-SEM, and L2-SEM are not shown because estimation failed under this setting, and only the proposed method yields valid results.}
\label{fig:performance_case2_signed}
\end{figure}

From Figure~\ref{fig:performance_case2_signed}, we observe that the distribution of estimation errors remains relatively compact even when outliers are included, and extreme outliers observed in Setting 1 do not appear. However, the distribution exhibits a noticeable spread, particularly with several negative outliers.

In the right panel excluding outliers, the median is shifted in the negative direction, indicating systematic underestimation. Compared to Case 1, this negative bias is more pronounced, suggesting that the estimator becomes biased as structural conditions deviate from the ideal setting.

Next, Figure~\ref{fig:performance_case2} summarizes the performance of the proposed method across dimensions.

\begin{figure}[h]
\centering
\includegraphics[width=0.9\linewidth]{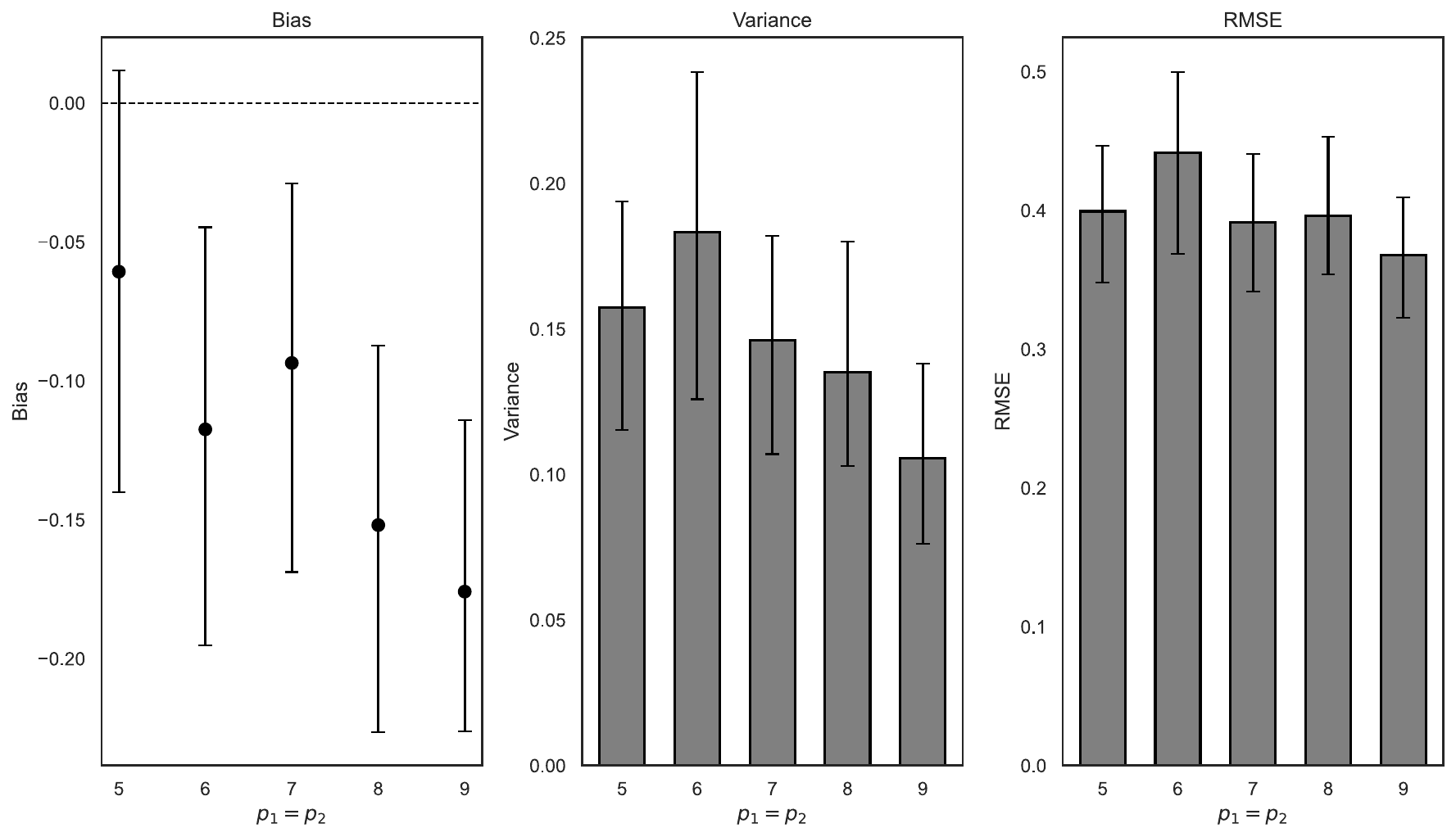}
\caption{Performance of the proposed method under Setting 2. Bias, variance, and RMSE are shown with their 95\% confidence intervals across different dimensionalities ($p_1 = p_2$).}
\label{fig:performance_case2}
\end{figure}

Bias becomes increasingly negative as dimensionality increases, indicating stronger underestimation in higher dimensions. The confidence intervals do not include zero, confirming that the bias is consistently negative.

Variance reaches its maximum around $p=6$ but decreases as dimensionality further increases, suggesting that variability of the estimator is reduced in higher dimensions. RMSE also peaks around $p=6$ but remains relatively stable overall, showing only moderate variation across dimensions.

These results indicate that, although the proposed method introduces bias under structural violations, it avoids catastrophic failure and maintains controlled estimation error, demonstrating robustness against deviations from sparsity assumptions.

\subsubsection{Behavior of the Proposed Method}

As shown in Table~\ref{tab:valid_rate}, the proposed method is the only approach that remains computationally feasible under both Setting 1 and Setting 2, whereas all competing methods fail under high-dimensional conditions. Therefore, we focus on the proposed method and analyze its behavior by visualizing Bias, Variance, and RMSE across both settings in a unified manner. The results are presented in Figure~\ref{fig:performance_all}.

\begin{figure}[h]
\centering
\includegraphics[width=0.9\linewidth]{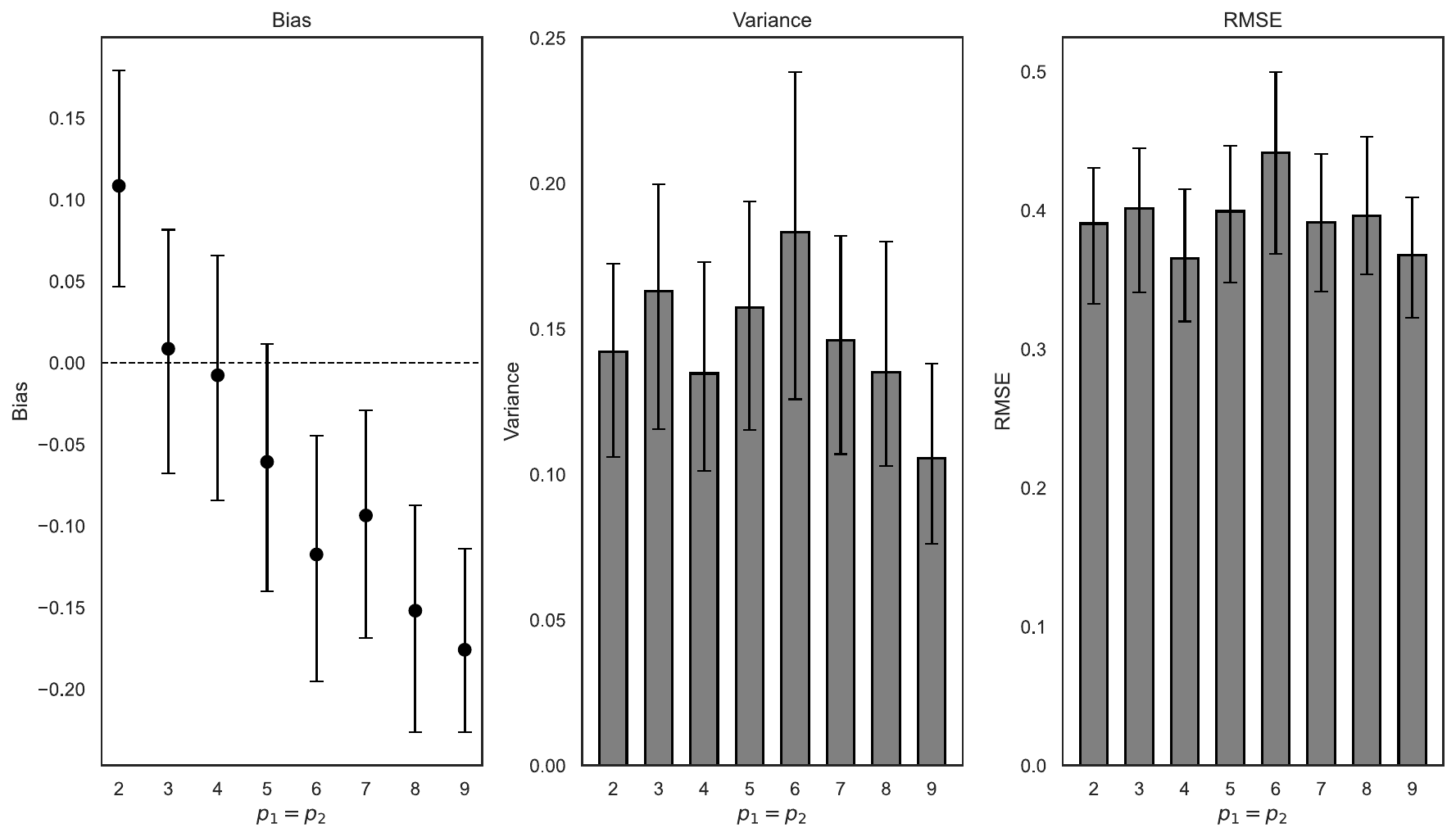}
\caption{Performance of the proposed method under Setting 1 and Setting 2. Bias, variance, and RMSE are shown with their 95\% confidence intervals across different dimensionalities ($p_1 = p_2$).}
\label{fig:performance_all}
\end{figure}

From Figure~\ref{fig:performance_all}, several key patterns can be observed. First, Bias decreases monotonically as $p$ increases from $2$ to $9$. Although a decreasing trend was already observed within Setting 2, the unified visualization across both settings reveals that this trend holds consistently over the entire range of dimensionalities.

Second, Variance does not exhibit a monotonic trend. It increases from $p=2$ to approximately $p=6$, reaches its maximum around $p=6$, and then decreases for $p \geq 7$. Despite this non-monotonic behavior, the width of the confidence intervals remains relatively stable across all dimensions, indicating that the variability of the estimator is well controlled.

Finally, RMSE remains within a relatively narrow range across all dimensionalities, without showing a clear monotonic pattern. Although it slightly increases around $p=6$, it subsequently decreases as $p$ increases further, suggesting that the overall estimation error remains stable even in higher-dimensional regimes.

These results indicate that the proposed method maintains controlled estimation behavior across both feasible and infeasible regimes for conventional methods. In particular, although bias gradually increases in magnitude, the overall estimation error does not deteriorate drastically, demonstrating stable performance across varying dimensional conditions.

\subsection{Case 2}

\subsubsection{Visualization of Results}

Figure~\ref{fig:case2_result} presents scatter plots of the estimation error $\widehat{\beta}^{(m)} - \beta$ obtained from Monte Carlo simulations. Rows correspond to SEM, L1-SEM, L2-SEM, and the proposed method (from top to bottom), while columns correspond to $p=2,3,4,5,6$ (from left to right). The vertical axis is restricted to the range $[-1.5, 1.5]$ for visualization purposes.

\begin{figure}[h]
\centering
\includegraphics[width=\linewidth]{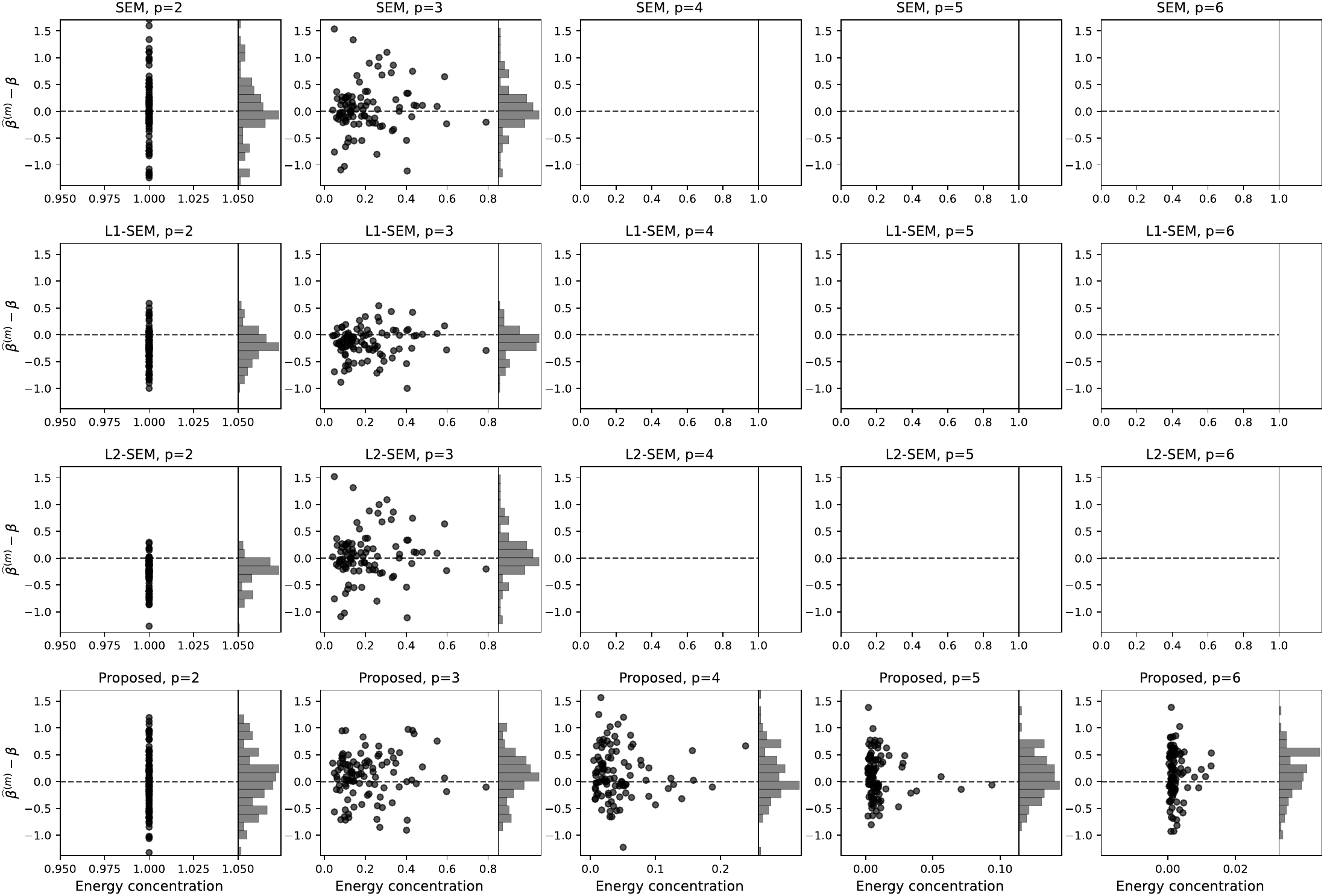}
\caption{Scatter plots of estimation error $\widehat{\beta}^{(m)}-\beta$ against energy concentration ratio under Case 2. Rows: SEM, L1-SEM, L2-SEM, and proposed method (top to bottom). Columns: $p_1=p_2=2,3,4,5,6$ (left to right). The vertical axis range is restricted to $[-1.5, 1.5]$.}
\label{fig:case2_result}
\end{figure}

In addition, Table~\ref{tab:sign_ratio_wide} summarizes whether the estimation error $\widehat{\beta}^{(m)} - \beta$ is positive or negative. The positive rate (pos) denotes the proportion of trials in which the error is greater than zero, while the negative rate (neg) denotes the proportion of trials in which the error is less than zero.

\begin{table}[h]
\centering
\caption{Sign ratio of estimation error by dimension and method. For each method and dimension $p$, ``pos'' and ``neg'' denote the proportions of Monte Carlo trials in which the estimation error $\hat{\beta}^{(m)} - \beta$ is positive and negative, respectively. Values are computed over valid trials only. Entries marked with ``--'' indicate that estimation failed and no valid results were obtained. This table provides insight into the directional bias of estimators, where deviations from a balanced ratio (0.5, 0.5) indicate systematic overestimation or underestimation.}
\label{tab:sign_ratio_wide}
\renewcommand{\arraystretch}{1.2}
\begin{tabular}{c cc cc cc cc}
\hline
 & \multicolumn{2}{c}{SEM}
 & \multicolumn{2}{c}{L1-SEM}
 & \multicolumn{2}{c}{L2-SEM}
 & \multicolumn{2}{c}{Proposed} \\
$p$
 & pos & neg
 & pos & neg
 & pos & neg
 & pos & neg \\
\hline
2 & 0.52 & 0.48 & 0.23 & 0.77 & 0.11 & 0.89 & 0.51 & 0.49 \\
3 & 0.53 & 0.47 & 0.28 & 0.72 & 0.53 & 0.47 & 0.65 & 0.35 \\
4 & -    & -    & -    & -    & -    & -    & 0.57 & 0.43 \\
5 & -    & -    & -    & -    & -    & -    & 0.57 & 0.43 \\
6 & -    & -    & -    & -    & -    & -    & 0.67 & 0.33 \\
\hline
\end{tabular}
\end{table}

From Figure~\ref{fig:case2_result}, for $p=2$, the energy concentration ratio takes a constant value, resulting in the disappearance of variation along the horizontal axis. Consequently, the scatter plot degenerates into a vertical structure. This behavior arises because only dominant components exist in this setting, making the energy concentration ratio constant. Therefore, for $p=2$, distributional summaries are more informative than scatter plots.

As shown in Table~\ref{tab:sign_ratio_wide}, L1-SEM and L2-SEM exhibit predominantly negative errors, whereas SEM and the proposed method show a more balanced distribution of positive and negative errors.

For $p \geq 3$, the proposed method successfully produces estimates across all dimensions. In contrast, for SEM, L1-SEM, and L2-SEM, the number of cases in which estimation fails increases as dimensionality grows, resulting in regions where scatter plots cannot be obtained.

Furthermore, for the proposed method, the proportion of positive errors tends to increase with dimensionality. Specifically, the positive ratio increases from 0.51 to 0.67 as $p$ increases, indicating a gradual shift in the error distribution. In contrast, L1-SEM and L2-SEM remain dominated by negative errors across dimensions.

Next, we examine the shape of the error distributions. To quantify the concentration of estimation errors, we compute the interquartile range (IQR) for each method. The IQR represents the range of the central 50\% of the distribution and serves as a robust measure of dispersion.

\begin{table}[h]
\centering
\caption{IQR of estimation error by dimension and method. For each method and dimension $p$, the IQR summarizes the dispersion of the estimation error $\hat{\beta}^{(m)} - \beta$ as the difference between the 75th and 25th percentiles, providing a robust measure of variability that is less sensitive to outliers than variance. Smaller IQR values indicate more stable estimation. Entries marked with ``--'' indicate that estimation failed and no valid results were obtained.}
\label{tab:iqr_wide}
\renewcommand{\arraystretch}{1.2}
\begin{tabular}{c c c c c}
\hline
$p$
 & SEM
 & L1-SEM
 & L2-SEM
 & Proposed \\
\hline
2 & 0.6347 & 0.3653 & 0.2587 & 0.6684 \\
3 & 0.4550 & 0.2901 & 0.4546 & 0.4529 \\
4 & -      & -      & -      & 0.5753 \\
5 & -      & -      & -      & 0.5766 \\
6 & -      & -      & -      & 0.6244 \\
\hline
\end{tabular}
\end{table}

From Table~\ref{tab:iqr_wide}, for $p=2,3$, L1-SEM and L2-SEM exhibit relatively small IQR values, indicating strong concentration of estimates around the center. In contrast, the proposed method shows slightly larger IQR values, suggesting a wider spread of estimation errors.

Moreover, for the proposed method, the IQR increases gradually with dimensionality, indicating that the concentration of the error distribution decreases as dimensionality increases. This reflects a gradual degradation in estimation precision rather than abrupt failure.

\section{Application}

From the simulation results, it is confirmed that the proposed method produces stable estimates even when the sparsity assumption is violated, that is, in regions with low energy concentration. Furthermore, in several cases, the lower bound of the bootstrap-based 95\% confidence interval exceeds zero, suggesting stability in the sign of the estimated parameter. From a practical perspective, this indicates that the method is useful for determining the direction of influence.

On the other hand, in such cases, the estimation may rely primarily on the self-covariance component, and thus structural validity and causal interpretation are not guaranteed. Therefore, caution is required when interpreting the estimated relationships.

As a real-data application, we apply the proposed method to the Colombia subsample of the DASS-42 (Depression Anxiety Stress Scales) dataset to estimate the latent path coefficient $\beta$ from Stress to Depression. The DASS-42 is a questionnaire consisting of 42 items designed to measure three psychological states: Depression, Anxiety, and Stress. The dataset is publicly available at \url{https://openpsychometrics.org/_rawdata/}, with each scale comprising 14 items. The data were collected between 2017 and 2019, with a total of 39,775 responses across 146 countries.

In this study, we focus on estimating the path coefficient from Stress to Depression. The analysis is conducted using the Colombia subsample, which contains $n=27$ observations. In this setting, the dimensionality of each block is $p_1=14$ and $p_2=14$, resulting in $n < p_1 + p_2$.

For estimation, we perform 10 random initializations, and for each initialization, we conduct 10 bootstrap resamples for both the self-covariance and cross-covariance components. Additionally, for statistical inference of the coefficient, we apply the bootstrap-based method described in Section~\ref{subsection:test}, using 100 bootstrap repetitions.

The estimation results are summarized in Table~\ref{tab:bootstrap_results}. We focus on the estimated coefficient $\beta$. The estimate is $\widehat{\beta}=0.513$, indicating a positive directional relationship from Stress to Depression. Furthermore, the bootstrap results confirm statistical significance.

\begin{table}[h]
\centering
\caption{Bootstrap estimation results for the structural parameter $\beta$. The table reports the point estimate $\hat{\beta}$, along with the bootstrap mean and standard deviation, which summarize the sampling distribution obtained via resampling. The 95\% confidence interval is constructed using the bootstrap distribution, and statistical significance is assessed based on whether zero is included in the interval. The minimum SRMR value indicates the goodness-of-fit of the selected model, where smaller values correspond to better fit.}
\label{tab:bootstrap_results}
\renewcommand{\arraystretch}{1.2}
\begin{tabular}{lc}
\hline
Metric & Value \\
\hline
$\hat{\beta}$ & 0.513 \\
Bootstrap mean & 0.601 \\
Bootstrap standard deviation & 0.112 \\
95\% confidence interval & [0.392, 0.825] \\
Significance & $p=0.000$, significant \\
Minimum SRMR & 0.149 \\
\hline
\end{tabular}
\end{table}

These results indicate a positive dependency from Stress to Depression. In the existing literature, stress has been identified as a major trigger for both the onset and recurrence of depression, and this relationship has been consistently supported by empirical studies \cite{hammen2005stress}. Furthermore, even after controlling for genetic factors and shared environmental influences, stressful life events have been shown to significantly predict the onset of major depression \cite{kendler1999causal}. The positive path from Stress to Depression estimated in this study is consistent with these established findings.

\section{Discussion}

In this section, we discuss the proposed method from the perspectives of feasibility of estimation, behavior under both model validity and violation, structural characteristics of estimation error, and practical usefulness and limitations. In particular, through comparisons with existing methods in small-sample settings with $p > n$, we clarify under which conditions the proposed method functions effectively and what caveats are required in its interpretation.

The results on the valid rate demonstrate that the proposed method consistently produces solutions even in small-sample settings with $p > n$. Notably, while existing methods fail to produce estimates under such conditions, the proposed method maintains a high valid rate. This indicates a clear advantage in terms of feasibility. This behavior is consistent with the fact that the proposed framework does not rely on likelihood-based estimation and thus avoids the breakdown caused by the singularity of the sample covariance matrix.

From the results of Case 1, where the model assumptions hold, the proposed method provides estimates that are comparable to or more stable than those obtained by existing methods. However, an examination of the error decomposition reveals that the estimation error tends to exhibit a bias toward underestimation, while the variance remains relatively small. This suggests that the error structure of the proposed method is primarily dominated by bias rather than variance. In other words, the method suppresses variability at the cost of introducing systematic deviation.

Taken together, these results imply that although the proposed method is clearly superior in terms of producing feasible solutions, careful attention is required regarding the nature of the obtained estimates. In particular, the bias-dominant behavior suggests that structural distortion may be introduced in exchange for stability.

From the results of Case 2, where the sparsity assumption is violated, it is confirmed that the proposed method continues to produce estimates. Moreover, these estimates do not diverge but remain within a controlled range. This phenomenon can be interpreted as the method estimating a pseudo-structure that satisfies the imposed constraints, even when the true structure does not. Therefore, the obtained estimates should not be interpreted as direct estimates of the true structure, but rather as the most consistent approximation under the given constraints.

Furthermore, simulation results indicate that even in regions where sparsity does not hold, that is, where the energy concentration ratio is low, the proposed method continues to produce stable estimates. In addition, cases were observed in which the lower bound of the bootstrap-based 95\% confidence interval exceeds zero, suggesting stability in the sign of the estimated coefficient. This implies that, although the magnitude of the coefficient may not be reliable, the method may still be useful for determining the direction of influence in practical settings. In this sense, the method can provide relatively stable information regarding the sign and direction of structural parameters, even in environments where conventional assumptions do not hold.

However, in such situations, the estimated values should be interpreted as approximations within the feasible constraint set rather than direct reflections of the true underlying structure. Therefore, even when the sign is stable, it should not be directly interpreted as evidence of causal relationships or structural validity, but rather as an exploratory or auxiliary indicator.

In addition, the increase in the negative ratio observed in Case 2 suggests that estimation becomes less stable as sparsity weakens. This result indicates that the sparsity assumption plays a fundamental role in ensuring estimation stability, and that deviations from this assumption directly affect estimation behavior.

From the application results, it is suggested that the proposed method enables the evaluation of whether relationships established in large-sample settings persist in small-sample environments. In other words, the method provides a framework for reassessing the robustness of findings that were previously dependent on large samples.

However, it should be noted that the empirical analysis is based on a relatively small sample ($n=27$) from Colombia, which imposes limitations on external validity. In particular, DASS-42 responses are known to be influenced by cultural factors and response tendencies, and therefore the results may not generalize directly to other populations. In addition, since the data are based on an online survey, the possibility of selection bias cannot be ruled out. Furthermore, the results depend on the specification of the latent variable model. Specifically, the magnitude and sign of the estimated path coefficient may vary depending on the factor loading structure and identification constraints. Although certain loadings are fixed for identification purposes in this study, the impact of these assumptions on the estimation results cannot be entirely excluded.

Finally, several directions for future research remain. Since the estimation of the factor loading vectors $\Lambda_x$ and $\Lambda_y$ relies on the self-covariance component, their accuracy is strongly dependent on the estimation quality of this component. Therefore, improving the estimation of self-covariance is a critical issue for enhancing overall performance. Moreover, the non-convex nature of the estimation problem increases sensitivity to initialization and search strategies, potentially leading to variability in the obtained solutions. Designing robust optimization procedures is thus an important topic for further investigation. In addition, extending the framework to multi-factor models and relaxing the current assumptions, such as $n < p_1 + p_2$ and $n > p_1, p_2$, toward more general settings (e.g., $n < \sum p_i$ and $n > p_i$) remains an important direction for future work.

\section{Conclusion}

In this study, we proposed a new estimation framework for covariance-based SEM to address the breakdown of statistical inference in small-sample settings with $p > n$. Conventional likelihood-based approaches become undefined due to the singularity of the sample covariance matrix. In contrast, the proposed method circumvents this difficulty by decomposing the estimation problem into self-covariance and cross-covariance components and treating them based on their distinct statistical properties. In particular, by combining the construction of a feasible set based on self-covariance with a relative error constraint on cross-covariance, the proposed method enables stable estimation even in high-dimensional settings.

Furthermore, by introducing a one-factor structure, issues of identifiability are resolved with minimal constraints, thereby reducing estimation instability. For the nonlinear and non-smooth estimation procedure, we also proposed a bootstrap-based framework for constructing confidence intervals, allowing uncertainty quantification without relying on conventional asymptotic theory.

The results of numerical experiments and real-data analysis demonstrate that the proposed method maintains a certain degree of stability even when the sparsity assumption is not strictly satisfied. In particular, it is shown to be practically useful for limited purposes such as determining the sign of coefficients. On the other hand, as the assumptions on the cross-covariance structure weaken, both bias and variability of the estimator tend to increase, indicating that the performance of the method depends on underlying structural conditions.

The main contribution of this study lies in reconstructing the feasibility of estimation within the covariance-based SEM framework under small-sample settings with $p > n$. However, the proposed method relies on structural constraints such as the one-factor model and rank-one cross-covariance structure. Extending the framework to multi-factor models and more general settings remains an important direction for future research. In addition, further investigation is required to establish theoretical properties of the estimator, including finite-sample error bounds and rigorous asymptotic analysis.

In conclusion, this study presents a new estimation paradigm for SEM in small-sample settings with $p > n$, providing a foundation for future theoretical developments and practical applications.

\section{Acknowledgment}
This work was supported by Japan Science and Technology Agency, Support for Pioneering Research Initiated by the Next Generation (no.JPMJSP2124) and Japan Society for the Promotion of Science Grants-in-Aid for Scientific Research (no.23K28192).

\appendix

\section{Verification of the Sparsity Condition}
\label{appendixA}

We assume that $p_1$ and $p_2$ are of the same order, i.e., $p_1 \asymp p_2$.
\begin{proposition}
Let the factor loadings be defined as
\begin{equation}
\lambda_{x,r} =
\begin{cases}
1 & r=1,2, \\
a\,p_1^{-\alpha} & r \ge 3,
\end{cases}
\quad
\lambda_{y,s} =
\begin{cases}
1 & s=1,2, \\
b\,p_2^{-\alpha} & s \ge 3,
\end{cases}
\quad
a,b>0,\ \frac{1}{2}<\alpha<1.
\end{equation}
Define
\[
\sigma_{(r,s)}^2 \asymp \lambda_{x,r}^2 \lambda_{y,s}^2,
\qquad
\Delta_{xy} = \sum_{(r,s)\in C} \sigma_{(r,s)}^2.
\]
Then, the sparsity condition~\eqref{eq:sparsity_condition} holds.
\end{proposition}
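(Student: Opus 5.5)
We take $C=\{1,\dots,p_1\}\times\{1,\dots,p_2\}$ and the dominant set $C_1=\{1,2\}\times\{1,2\}$. The proof then reduces to computing the three groups of contributions to $\Delta_{xy}$. Because $\sigma^2_{(r,s)}\asymp\lambda_{x,r}^2\lambda_{y,s}^2$, we may work up to fixed positive constants. Concretely, we assume constants $0<c_1\le c_2<\infty$, independent of $(r,s)$ and $p$, with $c_1\lambda_{x,r}^2\lambda_{y,s}^2\le\sigma^2_{(r,s)}\le c_2\lambda_{x,r}^2\lambda_{y,s}^2$. (In the one-factor model $\sigma_{(r,s)}=\beta\lambda_{x,r}\lambda_{y,s}$ exactly, so the constants are $c_1=c_2=\beta^2$ with $\beta\neq0$.)

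\textbf{Step 1: split $C$ into four blocks.}
\begin{itemize}
\item $C_1$ has four entries, each with $\sigma^2\asymp1$.
\item The mixed block $\{1,2\}\times\{3,\dots,p_2\}$ contributes $\asymp 2(p_2-2)\,b^2p_2^{-2\alpha}\asymp p_2^{1-2\alpha}$.
\item The symmetric mixed block $\{3,\dots,p_1\}\times\{1,2\}$ contributes $\asymp p_1^{1-2\alpha}$.
\item The tail block $\{3,\dots,p_1\}\times\{3,\dots,p_2\}$ contributes $\asymp a^2b^2(p_1-2)(p_2-2)p_1^{-2\alpha}p_2^{-2\alpha}\asymp p_1^{1-2\alpha}p_2^{1-2\alpha}$.
\end{itemize}
Since $\alpha>1/2$, the exponent $1-2\alpha$ is negative. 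With $p_1\asymp p_2\to\infty$, all three non-dominant contributions therefore tend to $0$. Hence $\Delta_{xy}=\sum_{C_1}\sigma^2_{(r,s)}+o(1)$, and $\Delta_{xy}$ is bounded above and away from zero.

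\textbf{Step 2: verify the three requirements of \eqref{eq:sparsity_condition}.}
\begin{itemize}
\item Energy ratio: $\sum_{C_1}\sigma^2/\Delta_{xy}=1-(\text{non-dominant sum})/\Delta_{xy}=1+o(1)$. This uses that the denominator is bounded below by $\sum_{C_1}\sigma^2\ge 4c_1>0$.
\item Lower bound on dominant entries: $\min_{C_1}\sigma^2_{(r,s)}\ge c_1>0$ uniformly in $p$, so the $\liminf$ is positive.
\item Smallness of the rest: every entry of $C\setminus C_1$ has at least one tail factor, so $\max_{C\setminus C_1}\sigma^2\lesssim \max(a^2p_1^{-2\alpha},\,b^2p_2^{-2\alpha})=o(1)$.
\end{itemize}

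\textbf{Main obstacle.} The only real subtlety is making the informal relation ``$\asymp$'' in the statement precise enough that the constants are uniform in $(r,s)$ and $p$; otherwise the summed errors might not be controlled. I would state the uniform two-sided bound above as the interpretation of $\asymp$ explicitly at the start.

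Two further points are worth recording. The mixed blocks, of order $p^{1-2\alpha}$, dominate the tail block, of order $p^{2-4\alpha}$, so the rate at which the energy ratio tends to $1$ is $O(p^{1-2\alpha})$. The hypothesis $\alpha<1$ is not needed for this proposition; only $\alpha>1/2$ is used.
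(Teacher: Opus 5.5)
Your proposal is correct and follows the paper's proof essentially step for step. You use the same choice $C_1=\{1,2\}\times\{1,2\}$, the same split of $C\setminus C_1$ into two mixed blocks of order $p_1^{1-2\alpha}$ and $p_2^{1-2\alpha}$ plus a tail block of order $p_1^{1-2\alpha}p_2^{1-2\alpha}$, and the same check of the three conditions. Your uniform two-sided reading of $\asymp$ and your lower bound $\Delta_{xy}\ge 4c_1$ make precise what the paper uses only implicitly.
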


\begin{proof}
Let $C_1 = \{1,2\} \times \{1,2\}$. For $(r,s) \in C_1$, we have $\sigma_{(r,s)}^2 \asymp 1$, and thus
\[
\liminf_{p \to \infty} \min_{(r,s)\in C_1} \sigma_{(r,s)}^2 > 0.
\]

Next, we decompose $\Delta_{xy}$ as
\[
\Delta_{xy}
=
\sum_{(r,s)\in C_1} \sigma_{(r,s)}^2
+
\sum_{(r,s)\in C \setminus C_1} \sigma_{(r,s)}^2.
\]

We evaluate each term in $C \setminus C_1$. For $r \ge 3$ and $s \in \{1,2\}$, we have
\[
\sum_{r \ge 3,\, s \in \{1,2\}} \sigma_{(r,s)}^2
=
O(p_1^{1-2\alpha})
=
o(1),
\]
since $\alpha > \frac{1}{2}$. Similarly, for $r \in \{1,2\}$ and $s \ge 3$,
\[
\sum_{r \in \{1,2\},\, s \ge 3} \sigma_{(r,s)}^2
=
O(p_2^{1-2\alpha})
=
o(1).
\]

Furthermore, for $r \ge 3$ and $s \ge 3$,
\[
\sum_{r \ge 3,\, s \ge 3} \sigma_{(r,s)}^2
=
O(p_1^{1-2\alpha} \, p_2^{1-2\alpha})
=
o(1).
\]

Therefore,
\[
\sum_{(r,s)\in C \setminus C_1} \sigma_{(r,s)}^2 = o(1),
\quad
\Delta_{xy}
=
\sum_{(r,s)\in C_1} \sigma_{(r,s)}^2 + o(1).
\]

It follows that
\[
\sum_{(r,s)\in C_1}
\frac{\sigma_{(r,s)}^2}{\Delta_{xy}}
=
1 + o(1).
\]

Finally, from the assumed structure of the factor loadings, we obtain
\[
\max_{(r,s)\in C \setminus C_1} \sigma_{(r,s)}^2 = o(1).
\]

Combining these results, all conditions in \eqref{eq:sparsity_condition} are satisfied, which completes the proof.
\end{proof}

\bibliographystyle{tfq}
\bibliography{interacttfqsample.bib}

\end{document}